\newlength{\widebarargwidth}
\newlength{\widebarargheight}
\newlength{\widebarargdepth}
\long\def\@makecaption#1#2{
        \vskip 0.8ex
        \setbox\@tempboxa\hbox{\small {\bf #1:} #2}
        \parindent 1.5em  
        \dimen0=\hsize
        \advance\dimen0 by -3em
        \ifdim \wd\@tempboxa >\dimen0
                \hbox to \hsize{
                        \parindent 0em
                        \hfil 
                        \parbox{\dimen0}{\def\baselinestretch{0.96}\small
                                {\bf #1.} #2
                                } 
                        \hfil}
        \else \hbox to \hsize{\hfil \box\@tempboxa \hfil}
        \fi
        }
\renewcommand{\baselinestretch}{1.04} 
\date{}
\DeclareMathOperator*{\psup}{\vphantom{p}sup}
\renewcommand{\sup}{\psup}
\newcommand{\F}{\text{Fr}}
\newcommand{\KL}{\text{KL}}
\def\eqref#1{Equation~\ref{#1}}
\def\1{\bm{1}}
\def\eps{{\epsilon}}
\def\rv{{\textnormal{v}}}
\DeclareMathAlphabet{\mathsfit}{\encodingdefault}{\sfdefault}{m}{sl}
\SetMathAlphabet{\mathsfit}{bold}{\encodingdefault}{\sfdefault}{bx}{n}
\def\gN{{\mathcal{N}}}
\def\gX{{\mathcal{X}}}
\def\sN{{\mathbb{N}}}
\def\sP{{\mathbb{P}}}
\def\sR{{\mathbb{R}}}
\newcommand{\E}{\mathbb{E}}
\newcommand{\Ls}{\mathcal{L}}
\def\abs#1{\left| #1 \right|}
\newcommand{\norm}[1]{\left\|#1\right\|}
\DeclareMathOperator{\tr}{tr}
\newtheorem{theorem}{Theorem}[section]
\newtheorem{lemma}[theorem]{Lemma}
\newtheorem{corollary}[theorem]{Corollary}
\newtheorem{definition}[theorem]{Definition}
\def\lv{\lVert}
\def\rv{\rVert}
\title{\textbf{The intriguing role of module criticality in the generalization of deep networks}}
\author{Niladri S. Chatterji \\ University of California, Berkeley \\ \textsf{chatterji@berkeley.edu} \and  Behnam Neyshabur  \\ Google \\ \textsf{neyshabur@google.com} \and Hanie Sedghi \\ Google \\ \textsf{hsedghi@google.com}}
\date{}
\newif\ificlr
\begin{document}
\maketitle
\begin{abstract}
We study the phenomenon that some modules of deep neural networks (DNNs) are more \emph{critical} than others. Meaning that rewinding their parameter values back to initialization, while keeping other modules fixed at the trained parameters, results in a large drop in the network's performance. Our analysis reveals interesting properties of the loss landscape which leads us to propose a complexity measure, called {\em module criticality}, based on the shape of the valleys that connect the initial and final values of the module parameters. We formulate how generalization relates to the module criticality, and show that this measure is able to explain the superior generalization performance of some architectures over others, whereas, earlier measures fail to do so.
\end{abstract}



\section{Introduction}
Neural networks have had tremendous practical impact in various domains such as revolutionizing many tasks in computer vision, speech and natural language processing. However, many aspects of their design and analysis have remained mysterious to this date. One of the most important questions is  ``what makes an architecture work better than others given a specific task?'' 
Extensive research in this area has led to many potential explanations on why some types of architectures have better performance; however, we lack a unified view that provides a complete and satisfactory answer. In order to attain a unified view on superiority of one architecture over another in terms of generalization performance, we need to come up with a measure that effectively captures this. 

\ificlr
\begin{figure}%
	\centering
	\subfloat{{\includegraphics[width=0.47\linewidth]{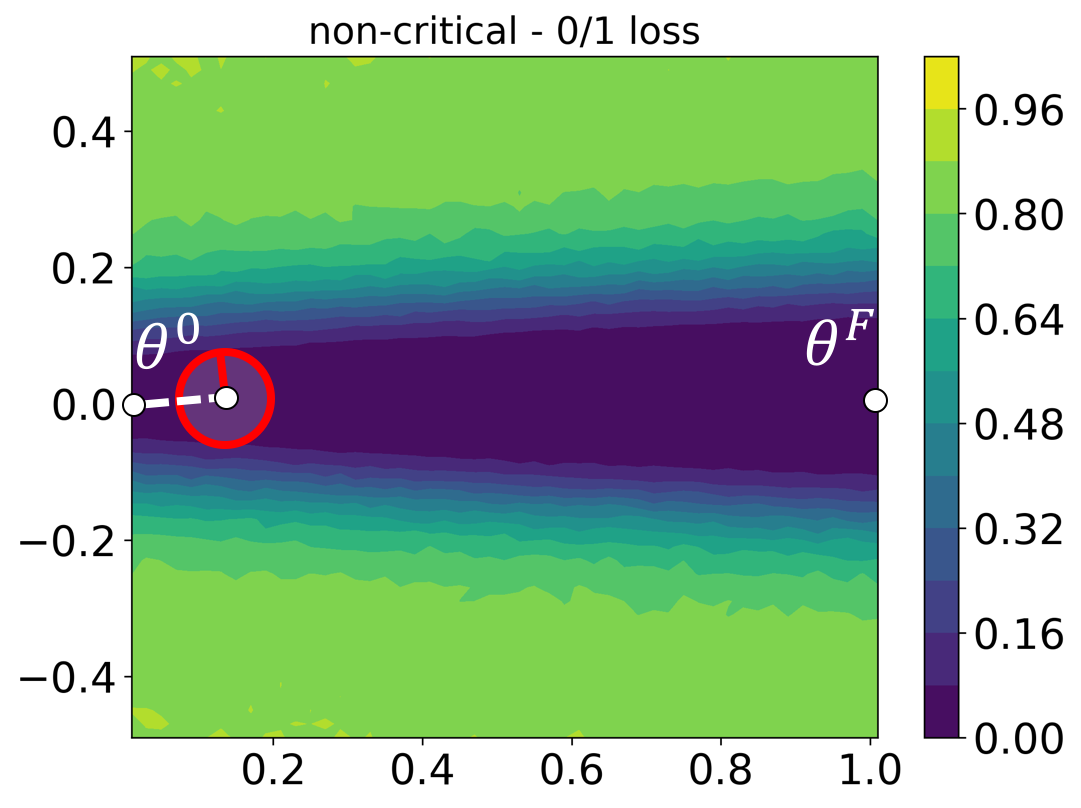}}} \qquad
	\subfloat{{\includegraphics[width=0.47\linewidth]{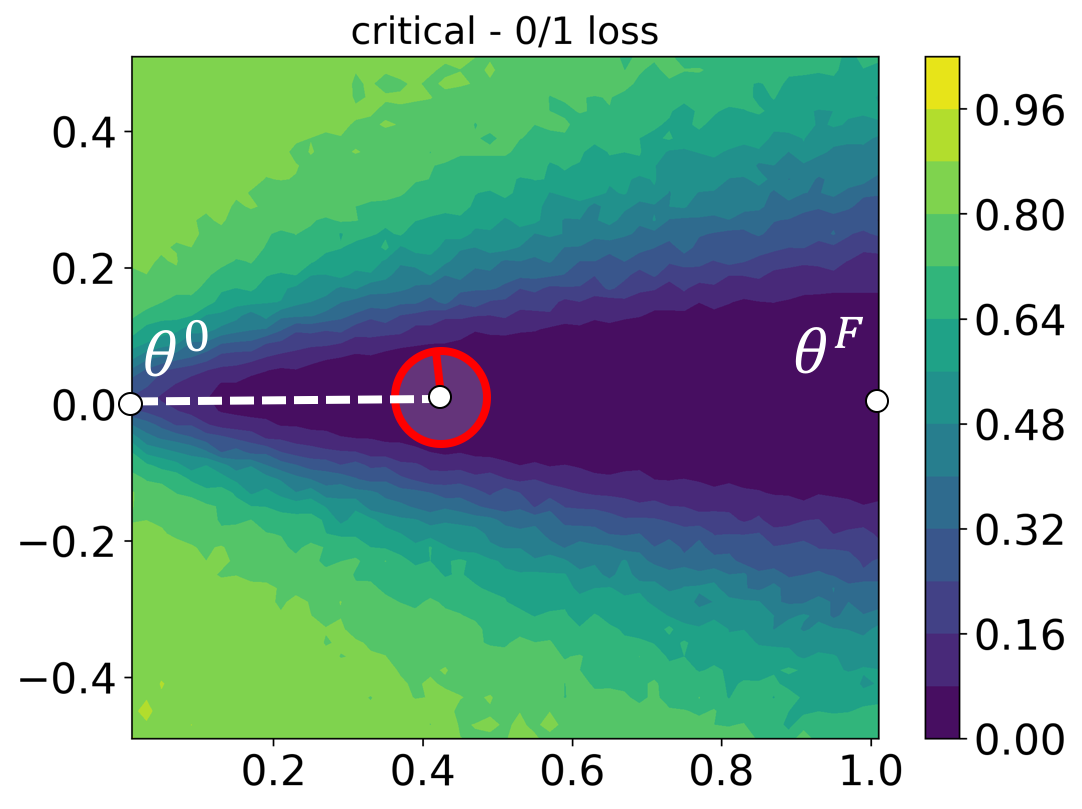}}} \\
	\caption{\textbf{Module Criticality}: Loss values in the valleys that connect the initial weights $\theta^0$ to the final weights $\theta^F$ of a non-critical (left) and a critical (right) module in the ResNet18 architecture. Given a ball with radius $r$ (length of the red line), module criticality can be defined as how far one can push the ball in the valley towards initialization (length of the white dashed line) divided by the radius $r$. Hence, non-critical modules are the ones with a wide valley connecting the initial weight vector to the final one whereas in critical modules, the valley either becomes too sharp or the loss values start to increase when the ball comes too close to the initial weight. The $x$ axis is simply chosen to be parallel to $\theta^F-\theta^0$ and the $y$ axis is a compact representation of all other dimensions generated by adding Gaussian noise to the points on the convex combination of $\theta^0$ and $\theta^F$ and evaluating the loss. The sign on the $y$ axis is decided based on the sign of the inner product of the noise to $\theta^0$.}
	\label{fig:valley}
\end{figure}
\else
\begin{figure}%
	\centering
	\subfloat{{\includegraphics[width=0.4\linewidth]{images/non-critical.png}}} \qquad
	\subfloat{{\includegraphics[width=0.4\linewidth]{images/critical.png}}} \\
	\caption{\textbf{Module Criticality}: Loss values in the valleys that connect the initial weights $\theta^0$ to the final weights $\theta^F$ of a non-critical (left) and a critical (right) module in the ResNet18 architecture. Given a ball with radius $r$ (length of the red line), module criticality can be defined as how far one can push the ball in the valley towards initialization (length of the white dashed line) divided by the radius $r$. Hence, non-critical modules are the ones with a wide valley connecting the initial weight vector to the final one whereas in critical modules, the valley either becomes too sharp or the loss values start to increase when the ball comes too close to the initial weight. The $x$ axis is simply chosen to be parallel to $\theta^F-\theta^0$ and the $y$ axis is a compact representation of all other dimensions generated by adding Gaussian noise to the the points on the convex combination of $\theta^0$ and $\theta^F$ and evaluating the loss. The sign on the $y$ axis is decided based on the sign of the inner product of the noise to $\theta^0$.}
	\label{fig:valley}
\end{figure}
\fi

Analyzing the generalization behavior of neural networks has been an active area of research since~\citet{NIPS1988_154}. Many generalization bounds and complexity measures have been proposed so far.
~\citet{bartlett1998sample} emphasized the importance of the norm of the weights in predicting the generalization error. Since then various analysis have been proposed. These results are either based on covering number and Rademacher complexity \citep[]{neyshabur2015norm, bartlett2017spectrally, neyshabur2019towards, long2019size, wei2019data},  or they use approaches similar to PAC-Bayes~\citep[]{mcallester1999pac, dziugaite2017computing, neyshabur2017exploring, neyshabur2018pac, arora2018stronger, nagarajan2019deterministic, zhou2018nonvacuous}. 
Recently authors have emphasized on the role of distance to initialization rather than norm of the weights in generalization~\citep[]{ dziugaite2017computing, nagarajan2019generalization, neyshabur2019towards, long2019size}.
Earlier results have an exponential dependency on the depth and focus on fully connected networks. 
More recently,~\citet{long2019size} provided generalization bounds for convolutional neural networks (CNNs) and fully connected networks used in practice and their bounds have linear dependency on the depth.

Despite the success of earlier works in capturing the dependency of generalization performance of a model on different parameters, they fail at the following task: Rank the generalization performance of candidate architectures for a specific task such that the ranking aligns well with the ground truth. Moreover, majority of these bounds are proposed for fully connected modules and it is not straightforward to evaluate them for different architectures such as ResNets. 

Every DNN architecture is a computation graph where each node is a module\footnote{A module is a node in the computation graph that has incoming edges from other modules and outgoing edges to other nodes and performs a linear transformation on its inputs. For layered model such as VGG, module definition is equivalent to definition of a layer.}. We are interested in understanding how different modules in the network interact with each other and influence generalization performance as a whole. To do so, we delve deeper into the phenomenon of ``module criticality'' which was reported by \citet{zhang2019all}.
They observed that modules of the network present different robustness characteristics to parameter perturbation. Specifically, they look into the following perturbation: Rewind one module back to its initialization value  while keeping all other modules fixed (at the final trained value). They note that the impact of this perturbation on network performance varies between modules and depends on which module was rewound. Some modules are ``critical''  meaning that rewinding their value to the initialization harms the network performance, while for others the impact of this perturbation on performance is negligible.
They show that various conventional DNN architectures exhibit this phenomenon.

Let us now informally define what we mean by the measure ``module criticality'' (see Figure~\ref{fig:valley}). For each module, we move on a line from its final trained value to its initialization value (convex combination\footnote{A convex combination of two points is a linear combination of them where the coefficients are non-negative and sum to 1. Every convex combination of two points lies on the line segment between the two.} path) while keeping all other modules fixed at their trained value. Then we measure the performance drop. Let $\theta^\alpha_i = (1-\alpha)\theta_i^0+\alpha \theta_i^F,  \alpha \in [0,1] $ be the convex combination between initial weights $\theta^0_i$ and the final weights $\theta^F_i$ at module $i$, where $\alpha_i$  is the minimum value between $0$ and $1$  when performance (train error) of the network drops by at most a threshold value $\eps$. If $\alpha_i$ is small we can move a long way back to initialization without hurting performance and the ``module criticality'' of this module would be low. Further, we also wish to incorporate the robustness to noise (that is, the valley width) for the module along this path. If the module is robust to noise along this path (that is, the valley is wide) then the module criticality would again be low (see Definition~\ref{def:module_criticality} for a formal definition).

In this paper, we seek to study this phenomenon in depth and shed some light on it by showing that conventional complexity measures cannot capture criticality (see Section~\ref{sec:investigate}). Next, we theoretically formulate this phenomenon and analyze its role in generalization. Through this analysis, we provide a new generalization measure that captures the dissimilarity of different modules and depicts how it influences the generalization of the corresponding DNN.
Intuitively, the closer we can get to initialization for each module, the better the generalization. 

We analyze the relation between generalization and module criticality through a PAC-Bayesian analysis. In Section~\ref{sec:generalization} we show that it is the overall network criticality measure and not the number of critical modules that controls generalization. If the network criticality measure is smaller for an architecture, it has better generalization performance.
In Section~\ref{sec:experiments}, we demonstrate through various experiments that our proposed measure is able to distinguish between different network architectures in terms of their generalization performance.  Moreover, the network criticality measure is able to correctly rank the generalization performance of different architectures better than the measures proposed earlier.

\paragraph{Notation: }We use upper case letters for matrices.
The operator norm and Frobenius norm of $M$ are denoted by $\lv M \rv_2$,  $\lv M \rv_{\text{Fr}}$ respectively. 
For $n \in \mathbb{N}$, we use $[n]$ to denote the set $\lbrace 1, \dotsc, n \rbrace$. 
Let $\Ls_S(f)$ be the loss of function $f$ on the training set $S$ with $m$ samples. We are mainly interested in the classification task where $\Ls_S(f)=\frac{1}{m}\sum_{(x,y)\in S}\mathbf{1}[f(x)[y] \leq \max_{j\neq y}f(x)[j]]$. For any $\gamma>0$, we also define margin loss $\Ls_{S, \gamma}(f)=\frac{1}{m}\sum_{(x,y)\in S}\mathbf{1}[f(x)[y] \leq \gamma+\max_{j\neq y}f(x)[j]]$. Let $\Ls_D(f)$ be the loss of function $f$ on population data distribution $D$ defined similar to $\Ls_S(f)$.
We will denote the function parameterized by $\Theta$ by $f_{\Theta}$. 

\section{Towards understanding module criticality} \label{sec:investigate}

\subsection{Setting}
A DNN architecture is a directed acyclic computation graph which may or may not be layered. In order to have a unifying definition between different architectures, we use the notion of a ``\emph{module}''.  A module is a node in the computation graph that has incoming edges from other modules and outgoing edges to other nodes, and performs a linear transformation on its inputs. For a layered model such as a VGG, a module is equivalent to a layer. On the other hand, in a ResNet some modules are parallel to each other. For example, a downsample module and the concatenation of two convolutional modules in a ResNet18-v1 architecture. Note that, similar to conventional definitions the non-linearity (such as a ReLU) is not part of the module.

\begin{figure}%
	\centering
	{{\includegraphics[width=.72\linewidth]{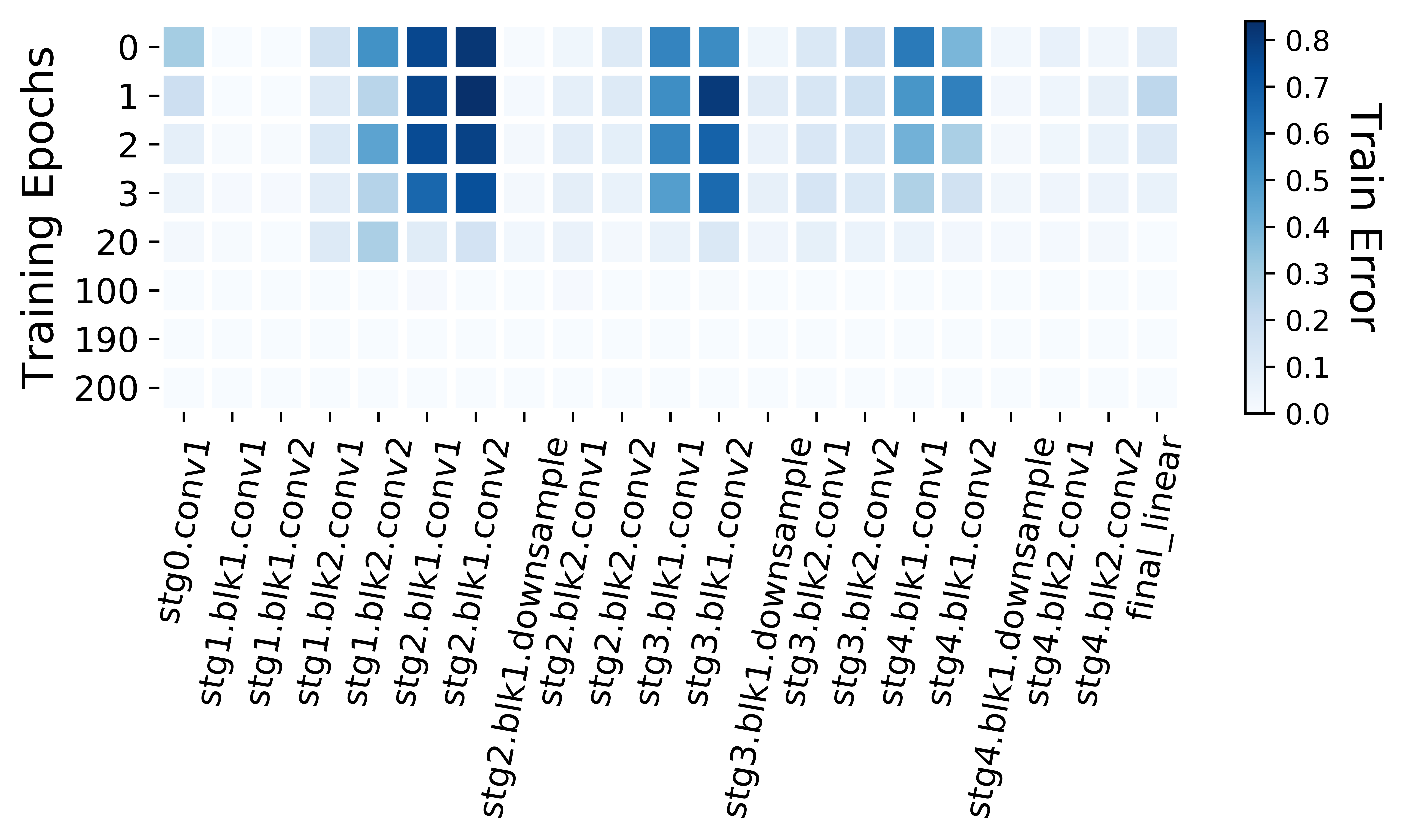}}}%
	\caption{Analysis of rewinding modules to initialization for the ResNet-18 architecture. Each row represents a module in ResNet18-v1 and each column represents a particular training epoch to which this module is rewound to. The difference from analysis of~\citet{zhang2019all} is that we rewind each module, whereas~\citet{zhang2019all} rewind the entire ResNet blocks.}
	\label{fig:reset_type}
\end{figure}
Let $\Theta = (\theta_1, \dotsc, \theta_d)$ correspond to all parameters of a DNN with $d$ modules, where $\theta_i$ refers to the weight matrix (or operator matrix in case of convolution) at module $i$ and $\theta_i^0, \theta_i^F$ refer to the value of weight matrix at initialization and the end of training, respectively. For sequential architectures, $d$ is equal to the depth of the network but that is not necessarily true for a general architectures such as a ResNet.
\subsection{Robustness to rewinding}
Consider the following perturbation to a trained network at some training epoch as considered by~\citet{zhang2019all}. For each module in the network, rewind its value back to its value at this training epoch while keeping the values of all other modules fixed (at their final trained value). Next, measure the change in performance of the model before and after this manipulation. We repeat a similar analysis that differs from that of~\citet{zhang2019all} in one detail.~\citet{zhang2019all} rewind the whole ResNet block at once, whereas, we rewind each module (each convolutional module) separately. This rewind analysis is shown in Figure~\ref{fig:reset_type} for ResNet18-v1.  Each column represents a module in ResNet18-v1 and each row represents a particular training epoch to which this module is rewound to. Similar to earlier analysis, we observe that for many modules of the network, this manipulation does not influence the network performance drastically, while, for some others the impact is more pronounced. For example, in Figure~\ref{fig:reset_type}  we look at the effect of rewinding on train error. The ``Stage2.block1.conv2'' module is critical, whereas, most other modules, once rewound, do not affect the performance. In Figure~\ref{fig:areset_type} in Appendix~\ref{sec:figures} we plot the effect of rewinding on different performance criteria (train loss, train error and test error) and observe that they exhibit a similar trend.
\paragraph{A stable phenomena:} The plots in Figure~\ref{fig:reset_type} capture a network trained with SGD with weights initialized using the standard Kaiming initialization~\citep{he2015delving}. To ensure that the observed phenomenon is not an artifact of the training method and the initialization scheme, we repeated the experiments with different initialization and optimization methods. We saw a similar pattern. For example,  Figures~\ref{fig:fixup},~\ref{fig:adam} in the appendix illustrate the pattern when we changed the initialization to Fixup~\citep{zhang2019fixup}, and when we replace SGD with Adam~\citep{kingma2014adam} respectively.


\subsection{What measures fail to distinguish critical layers} 

\begin{figure}
    \centering
    \subfloat[A non-critical module]{{\includegraphics[width=0.45\linewidth]{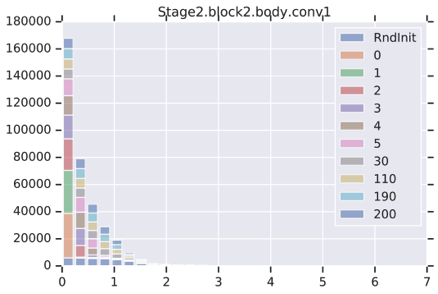}} \label{fig:histogram_2}}%
    \qquad
    \subfloat[A critical module]{{\includegraphics[width=0.45\linewidth]{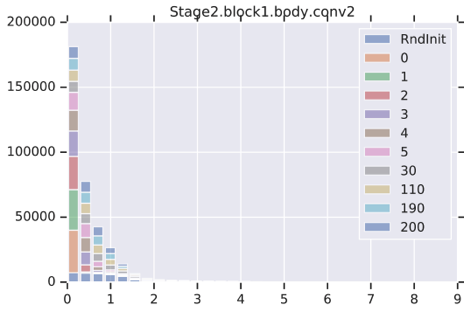}} \label{fig:histogram_1}}%
    \caption{Spectrum of a non-critical and a critical module during different epochs of training.}
    \label{fig:spectrum}
\end{figure}

\paragraph{Spectrum of weight matrices:} 
We explore the change in the spectrum of different weight matrices on rewinding and note that the spectrum for a critical and non-critical module look similar.  This is shown in Figure~\ref{fig:spectrum}. We calculate the spectrum of convolutional layers using the algorithm by~\citet{sedghi2018singular}.

\paragraph{Distance to initialization:} Next, we analyze the operator norm of difference from initialization for each module. Figure~\ref{fig:operator_norm} in the appendix depicts this and reveals no difference between critical and non-critical modules. 
A similar plot was explored by \citet{zhang2019all}, where they find that the Frobenius norm and the infinity norm also fail to capture criticality.

\paragraph{Change in the activation patterns: }

We investigated the change in the activation patterns of a network when we rewind a module. To do this, we study the similarity between two networks: 1. The original trained network and 2. The network with a rewound module. We use CKA~\citep{kornblith2019similarity} as the measure of similarity. 
For a non-critical module, the original and rewound networks are similar and in case of a  critical module, the similarity between the activation patterns between the two networks degrades gradually rather than abruptly. See Figure~\ref{fig:similarity} in Appendix~\ref{sec:figures}.
\section{Generalization bounds based on module criticality}
\label{sec:generalization}
Our goal is to understand criticality and how it affects the generalization performance of a DNN.  Inspired by the rewind to initialization experiments of \citet{zhang2019all}, we take one step further and consider changing the value of each module, to the convex combination of its initial and final value. That is, for each module $i$, we replace $\theta_i $ with $\theta^\alpha_i = (1-\alpha)\theta_i^0+\alpha \theta_i^F,  \alpha \in [0,1],
$ and keep all other layers fixed. Then we look at the effect of this perturbation on the performance of the network.

Figure~\ref{fig:convcomb} depicts how the train error, test error and train loss change as we decrease the value of $\alpha$ in  $\theta^\alpha_i$  when $i$ refers to a critical module (yellow dashdot curve), a non-critical module (red dashed curve) and all modules (blue solid curve). We find that along this convex combination path all these performance measures degrade monotonically (increase in error and loss), as we move from the final weights to the initial weights. 

The above experiment shows the effect of moving along a convex combination between module's initial and trained value. 
To capture the relation between criticality and generalization, we are interested in also accounting for the width of the valley as we move from the final value to the initial value. 
In particular, we are interested in analyzing what happens if we are moving inside a ball of some radius $\sigma_i$ around each point in this path. PAC-Bayesian analysis, looks for a ball around final value of parameters such that the loss does not change if we move in this ball. Bringing this idea together with the one mentioned above, we are interested in moving from the final value to the initial value in a valley of some radius, and want to find out how far we can move on this path. Intuitively, being able to move closer to initialization values indicate that the effective function class is smaller and hence the network should generalize better. For example, in the extreme case where none of the weights change from their initialization value the function class would be a single function (the initial function) and the generalization error would be very low ($\sim$0$\%$) as both the train and test error would be very high (but equal). In this paper, we consider the case of trained models where train error is very low and hence low generalization error means good performance on test data.


\begin{definition}[Module and Network Criticality] \label{def:module_criticality}
	Given an $\eps>0$ and network $f_{\Theta}$, we define the module criticality for module $i$ as follows:
	\begin{equation}\label{eqn:alpha}
	\mu_{i,\eps}(f_{\Theta})= \min_{0 \leq \alpha_i,\sigma_i \leq 1}\left\{\frac{\alpha_i^2\norm{\theta_i^F-\theta_i^0}_{\text{\emph{Fr}}}^2}{\sigma_{i}^2}: \E_{u\sim\mathcal{N}(0, \sigma_i^2)}[\Ls_S(f_{\theta^\alpha_i+u,\Theta^F_{-i}})] \leq \eps\right\},
	\end{equation}
	We also define the network criticality as the sum of the module criticality over modules of the network:
	\begin{equation} 
	\mu_\eps(f_{\Theta})=\sum_{i=1}^d \mu_{i,\eps}(f_{\Theta}).
	\end{equation}
\end{definition}

Here, $\mathcal{L}_S$ denotes the empirical zero-one loss over the training set, $f_{\theta^\alpha_i,\Theta^F_{-i}}$ is the DNN's function value where weight matrix corresponding to $i^{th}$ module is replaced by $\theta^\alpha_i$ and all other modules are fixed at their values in the end of training, $\Theta_{-i}^F$.  $\theta^\alpha_i = (1-\alpha)\theta_i^0+\alpha \theta_i^F$, where $\theta_i^0$ is the value of the weight matrix at initialization and $\theta_i^F$ is the trained value.

Intuitively, network criticality measure is sum of module criticalities. This is also theoretically derived using the analysis below.

\subsection{A PAC-Bayesian generalization bound}
We attempt to understand the relationship between module criticality, and generalization by deriving a generalization bound using the PAC-Bayesian framework \citep{mcallester1999pac}. Given a prior distribution over the parameters that is picked in advance before observing a training set, a posterior distribution over the parameters that could depend on the training set and a learning algorithm, the PAC-Bayesian framework bounds the generalization error in terms of the Kullback-Leibler (KL) divergence~\citep{kullback1951information} between the posterior and the prior distribution. We use PAC-Bayesian bounds as they hold for any architecture.

\begin{figure}
    \centering
    \includegraphics[width=\linewidth]{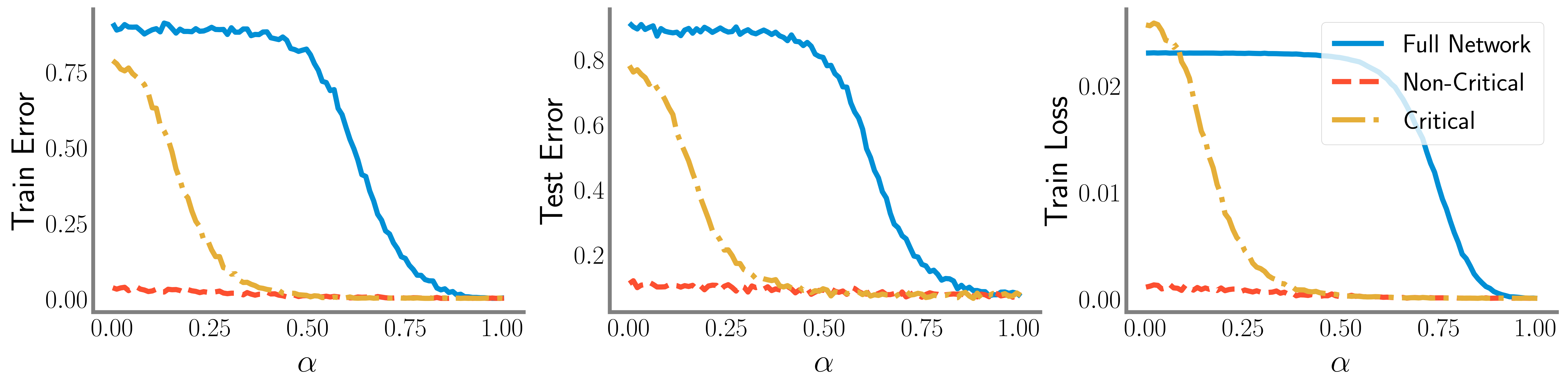}
    \caption{Performance degradation as we move on convex combination path from final to initial value of modules. We find that along this path the training error (as well as test error and train loss) increases monotonically from the final weights to initial weights.
    The blue (solid) curve is when we replace all the parameters in the network by the convex combination between their initial and final value simultaneously, the red (dashed) curve corresponds to moving on the convex path for a single (non-critical) layer and the yellow (dashdot) curve corresponds to moving on the convex path for a critical layer in ResNet-18 architecture.}
    \label{fig:convcomb}
\end{figure}

The intuition from Figure~\ref{fig:convcomb} suggests moving the parameters of each module as close as possible to the initialization value before harming the performance.  For such $\alpha_i$, we can then define the posterior $Q_i$ for module $i$ to be a Gaussian distribution centered at $\theta_i^\alpha$ with covariance matrix $\sigma_i^2I$, that is, as if we have additive noise $u_i\sim \gN(0,\sigma_iI)$. We use $\Theta^\alpha$ to refer to the case where all the parameters $\theta_i$ are replaced with $\theta_i^{\alpha_i}$ and matrix $U$ includes all the noise $u_i$.
Then the following theorem holds.
\begin{theorem}\label{thm:bound}
For any data distribution $D$, number of samples $m\in \sN$ , for any  $0 < \delta < 1$, for any $0 <\sigma_{i}\leq 1$  and any $0\leq \alpha_i \leq 1$, with probability $1-\delta$ over the choice of the training set $S_m\sim D$ the following generalization bound holds:
\begin{align*}
\E_{U}[\Ls_D(f_{\Theta^\alpha+U})]\leq \E_{U}[\Ls_S(f_{\Theta^\alpha+U})] + \sqrt{\frac{\frac{1}{4}\sum_{i=1}^d k_i \log\left(1 + \frac{ \alpha_i^2\norm{\theta_i^F-\theta_i^0}_{\text{\emph{Fr}}}^2}{k_i\sigma_{i}^2}\right) +\log \left(\frac{m}{\delta}\right)+\widetilde{\mathcal{O}}(1)}{m-1}},
\end{align*}
where  $k_i$ is the number of parameters in module $i$. For example, for a convolution module with kernel size $q_i \times q_i$ and number of output channels $c_i$,  $k_i = q_i^2 c_{i-1} c_i$.
\end{theorem}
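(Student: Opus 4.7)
The plan is to invoke the classical McAllester PAC-Bayesian inequality: for any data-independent prior $P$ and any (possibly data-dependent) posterior $Q$ on the parameter space, with probability at least $1-\delta$ over $S \sim D^m$,
\begin{equation*}
\E_{w\sim Q}[\Ls_D(f_w)] \leq \E_{w\sim Q}[\Ls_S(f_w)] + \sqrt{\frac{\KL(Q\|P) + \log(m/\delta) + \widetilde{\mathcal{O}}(1)}{2(m-1)}}.
\end{equation*}
I would take $Q = \prod_{i=1}^d \gN(\theta_i^\alpha, \sigma_i^2 I_{k_i})$, which exactly matches the noise distribution $U$ in the statement, so $\E_{w\sim Q}[\Ls_D(f_w)] = \E_U[\Ls_D(f_{\Theta^\alpha + U})]$ and similarly for $\Ls_S$. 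For the prior I would take $P = \prod_{i=1}^d \gN(\theta_i^0, \tau_i^2 I_{k_i})$ centered at initialization, with variances $\tau_i^2$ to be chosen (before seeing $S$).

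Next I would compute the KL. Since both $P$ and $Q$ factorize across modules and each factor is an isotropic Gaussian in $\mathbb{R}^{k_i}$, the standard multivariate Gaussian KL formula gives
\begin{equation*}
\KL(Q\|P) = \sum_{i=1}^d \frac{1}{2}\left[k_i\log\tfrac{\tau_i^2}{\sigma_i^2} - k_i + \tfrac{k_i\sigma_i^2 + \alpha_i^2\norm{\theta_i^F - \theta_i^0}_{\text{Fr}}^2}{\tau_i^2}\right],
\end{equation*}
using $\norm{\theta_i^\alpha - \theta_i^0}_{\text{Fr}}^2 = \alpha_i^2\norm{\theta_i^F - \theta_i^0}_{\text{Fr}}^2$. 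Differentiating the $i$-th summand in $\tau_i^2$ and setting to zero yields the optimal prior variance $\tau_i^{\star 2} = \sigma_i^2 + \alpha_i^2\norm{\theta_i^F - \theta_i^0}_{\text{Fr}}^2/k_i$, at which the cross term $k_i + \alpha_i^2\norm{\theta_i^F-\theta_i^0}_{\text{Fr}}^2/\tau_i^{\star 2}$ collapses to $k_i$, so the summand simplifies cleanly to $\frac{k_i}{2}\log\left(1 + \alpha_i^2\norm{\theta_i^F-\theta_i^0}_{\text{Fr}}^2/(k_i\sigma_i^2)\right)$. Substituting into the PAC-Bayes inequality yields exactly the target bound, since the $\tfrac{1}{2}$ from the KL combines with the $\tfrac{1}{2(m-1)}$ inside the square root to produce the $\tfrac{1}{4}$ in front of the sum.

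The main obstacle is that $\tau_i^{\star 2}$ depends on the training data (through $\theta_i^F$, and also through any optimization over $\alpha_i,\sigma_i$ performed after training), which conflicts with the data-independence requirement on $P$. I would handle this by the standard grid-and-union-bound argument: cover the admissible ranges of $\tau_i^2$, $\alpha_i$, and $\sigma_i$ by a logarithmic grid of $N = \widetilde{\mathcal{O}}(1)$ combinations (e.g.\ geometric in $\tau_i^2$, uniform in $\alpha_i,\sigma_i \in [0,1]$ at spacing $1/\mathrm{poly}(m)$), instantiate the PAC-Bayes bound with each grid prior at confidence $\delta/N$, and union-bound. For any realization of $(\alpha_i,\sigma_i,\theta_i^F)$, snapping to the nearest grid point inflates the KL only by an $O(1)$ multiplicative constant and inflates $\log(1/\delta)$ by $\log N$, both absorbed into the $\widetilde{\mathcal{O}}(1)$ term. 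Apart from this bookkeeping, the proof is a one-line application of McAllester's bound plus a direct Gaussian computation.
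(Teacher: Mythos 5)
Your proposal follows essentially the same route as the paper's proof: McAllester's PAC-Bayes bound applied with a Gaussian posterior centered at $\Theta^\alpha$ and a Gaussian prior centered at $\Theta^0$, an exact Gaussian KL computation whose prior variance is chosen near the per-module optimizer $\tau_i^{\star 2} = \sigma_i^2 + \alpha_i^2\norm{\theta_i^F - \theta_i^0}_{\text{Fr}}^2/k_i$, and a covering plus union bound over the (data-dependent) prior variance — the paper implements this last step via the Langford--Caruana exponential grid $\sigma_{P,i}^2 = \exp((4m-j_i)/k_i + 1)$ with confidence weights proportional to $1/j_i^2$, which is what produces the $\widetilde{\mathcal{O}}(1)$ term. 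One minor redundancy: you propose to also discretize $\alpha_i$ and $\sigma_i$, but this is unnecessary because the PAC-Bayes bound holds simultaneously for all posteriors once the prior is fixed, so only the prior variance $\tau_i^2$ requires covering.
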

The exact bound including the constants and the proof of the theorem above is given in Appendix~\ref{sec:thm1-proof}. Theorem~\ref{thm:bound} already gives us some insight into generalization of the original network. However, it is not exactly a generalization bound on the original network but rather on a perturbed network. We conjecture that for almost any realistic distribution $D$, any random $\Theta^0$, any $\Theta^F$ achieved by known gradient based optimization algorithms, any $0\leq \alpha \leq 1$ and any $\sigma\geq 0$, the test error does not improve by taking a convex combination of parameters and their initial values followed by Gaussian perturbation. Therefore, we have that $\Ls_D(f_{\Theta^F}) \leq~\E_{U}[L_D(f_{\Theta^\alpha+U})]$. The following corollary restates Theorem~\ref{thm:bound} by using this assumption and optimizing over $\alpha$ and $\sigma$ in the bound.
\begin{corollary} \label{cor:our_generalization_corollary}
	For any data distribution $D$, number of samples $m\in \sN$. For any $\eps>0$, for any  $0 < \delta$,  if $\Ls_D(f_{\Theta^F}) \leq \E_{U}[\Ls_D(f_{\Theta^\alpha+U})]$ where $u_i\sim\gN(0,\sigma_iI)$ , then with probability $1-\delta$ over the choice of the training set $S_m\sim D$, the following generalization bound holds
	\begin{align*}
	\Ls_D(f_\Theta)\leq \eps + \sqrt{\frac{\frac{1}{4}\mu'_\eps(f_\Theta) +\log\left( \frac{m}{\delta}\right)+\widetilde{\mathcal{O}}(1)}{m-1}},
	\end{align*}
	where $\mu'_\eps(f_\Theta)$ is calculated as follows:
	\begin{align*}
	\mu'_\eps(f_\Theta)= \min_{0\leq \alpha,\sigma\leq 1}\left\{\sum_{i} \frac{\alpha_i^2\norm{\theta_i^F-\theta_i^0}_{\text{\emph{Fr}}}^2}{\sigma_{i}^2}: \E_{U}[\Ls_S(f_{\Theta^\alpha+U})] \leq \eps \right\}.
	\end{align*}
\end{corollary}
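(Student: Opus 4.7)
The plan is to derive the corollary as a direct consequence of Theorem~\ref{thm:bound} by: (i) using the stated assumption $\Ls_D(f_{\Theta^F}) \leq \E_U[\Ls_D(f_{\Theta^\alpha+U})]$ to replace the left hand side of the PAC-Bayes bound with $\Ls_D(f_{\Theta^F})$; (ii) using the feasibility constraint in the definition of $\mu'_\eps$ to bound the empirical error term by $\eps$; (iii) simplifying the KL term using the elementary inequality $\log(1+x)\le x$; and (iv) finally taking the infimum over the feasible pairs $(\alpha,\sigma)$.

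First, I would apply Theorem~\ref{thm:bound} with a fixed but arbitrary feasible $(\alpha,\sigma)$ pair satisfying $0\le\alpha_i,\sigma_i\le 1$ and $\E_U[\Ls_S(f_{\Theta^\alpha+U})]\le\eps$. Using the assumption $\Ls_D(f_{\Theta^F})\leq \E_U[\Ls_D(f_{\Theta^\alpha+U})]$ on the left hand side and the feasibility constraint on the right hand side, we obtain
\begin{align*}
\Ls_D(f_{\Theta^F}) \;\leq\; \eps + \sqrt{\frac{\frac{1}{4}\sum_{i=1}^d k_i\log\!\left(1+\frac{\alpha_i^2\norm{\theta_i^F-\theta_i^0}_{\text{Fr}}^2}{k_i\sigma_{i}^2}\right)+\log\!\left(\frac{m}{\delta}\right)+\widetilde{\mathcal{O}}(1)}{m-1}}.
\end{align*}

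Second, I would bound each summand $k_i\log(1+\alpha_i^2\norm{\theta_i^F-\theta_i^0}_{\text{Fr}}^2/(k_i\sigma_i^2))$ by $\alpha_i^2\norm{\theta_i^F-\theta_i^0}_{\text{Fr}}^2/\sigma_i^2$ using the standard inequality $\log(1+x)\le x$ applied with $x=\alpha_i^2\norm{\theta_i^F-\theta_i^0}_{\text{Fr}}^2/(k_i\sigma_i^2)$ and multiplying through by $k_i$. This eliminates the per-module parameter count from the numerator and yields
\begin{align*}
\Ls_D(f_{\Theta^F}) \;\leq\; \eps + \sqrt{\frac{\frac{1}{4}\sum_{i=1}^d \frac{\alpha_i^2\norm{\theta_i^F-\theta_i^0}_{\text{Fr}}^2}{\sigma_{i}^2}+\log\!\left(\frac{m}{\delta}\right)+\widetilde{\mathcal{O}}(1)}{m-1}}.
\end{align*}

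Third, since the bound above holds simultaneously for every feasible $(\alpha,\sigma)$ (Theorem~\ref{thm:bound} applies uniformly in its parameters with the stated probability, or alternatively one can apply a union bound / grid argument over a fine discretization of $[0,1]^{2d}$ absorbed into the $\widetilde{\mathcal{O}}(1)$ term), I would take the infimum of the right hand side over all feasible pairs. Since $\eps$, $\log(m/\delta)$, and $\widetilde{\mathcal{O}}(1)$ do not depend on $(\alpha,\sigma)$, the infimum lands inside the square root on the term $\sum_i \alpha_i^2\norm{\theta_i^F-\theta_i^0}_{\text{Fr}}^2/\sigma_i^2$, which is precisely $\mu'_\eps(f_\Theta)$ by definition. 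This yields the claimed bound.

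The main obstacle is point (iii): Theorem~\ref{thm:bound} as stated holds for a \emph{given} choice of $(\alpha,\sigma)$ with probability $1-\delta$, but $\mu'_\eps$ is a data-dependent optimum. The clean way to handle this is the usual PAC-Bayes trick of discretizing each $\alpha_i,\sigma_i$ onto a grid of size polynomial in $m$ and taking a union bound, so that the extra $\log$ factors from the grid are absorbed into the $\widetilde{\mathcal{O}}(1)$ slack already present in Theorem~\ref{thm:bound}; after this, the optimizer $(\alpha^*,\sigma^*)$ achieving $\mu'_\eps$ can be rounded to the nearest grid point with negligible effect on both the feasibility constraint (by continuity of $\E_U[\Ls_S(\cdot)]$ in $(\alpha,\sigma)$) and on the bound.
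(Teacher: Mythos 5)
Your proof is correct and, as far as one can tell, matches the intended route (the paper only remarks that the corollary ``restates Theorem~\ref{thm:bound} by using this assumption and optimizing over $\alpha$ and $\sigma$,'' and does not spell out the algebra). Steps (i), (ii), and (iv) are the obvious substitutions, and step (iii) is the small but essential observation that makes the KL term collapse to $\mu'_\eps$: with $y_i=\alpha_i^2\norm{\theta_i^F-\theta_i^0}_{\text{Fr}}^2/(k_i\sigma_i^2)$ one has $k_i\log(1+y_i)\le k_i y_i=\alpha_i^2\norm{\theta_i^F-\theta_i^0}_{\text{Fr}}^2/\sigma_i^2$, which removes both the logarithm and the per-module parameter count $k_i$ exactly as the corollary requires.

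Your flagged ``main obstacle,'' however, is not actually an obstacle, and the proposed fix (a grid over $[0,1]^{2d}$ with a union bound) is unnecessary. In the PAC-Bayes framework the \emph{posterior} $Q$ is always permitted to depend on the training sample; the bound in McAllester's theorem holds with probability $1-\delta$ simultaneously for all posteriors. In the proof of Theorem~\ref{thm:bound}, the pair $(\alpha_i,\sigma_i)$ parametrizes the posterior $Q_i=\gN(\theta_i^\alpha,\sigma_i^2 I)$, not the prior; the only data-independent object is the prior variance $\sigma_{P,i}$, and the paper already runs the Langford--Caruana covering/union-bound argument over the grid of $\sigma_{P,i}$ values (indexed by $j_i$) inside the proof of Theorem~\ref{thm:bound}, with the log cost absorbed into $\widetilde{\mathcal{O}}(1)$. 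Consequently Theorem~\ref{thm:bound} holds uniformly over data-dependent choices of $(\alpha,\sigma)$ on the single good event of probability $1-\delta$, and you may pass directly to the minimizer defining $\mu'_\eps$ without any additional discretization of $(\alpha,\sigma)$.
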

Note that the above bound uses a slightly different notion of network criticality compared to Definition~\ref{def:module_criticality},  as the bound requires finding $\alpha$ and $\sigma$ values simultaneously for all modules, whereas, Definition~\ref{def:module_criticality}  allows us to decouple the search over $\alpha$ and $\sigma$.

\paragraph{Deterministic generalization bound for convolutional networks} Although PAC-Bayesian bounds are data-dependent and hence numerically superior, they provide less insight about the underlying reason that results in generalization. For example, the flatness of the solution after adding Gaussian perturbation can be computed numerically. But computing this value does not reveal what properties of the network enforce the loss surface around a point to be flat. On the other hand, deterministic norm-based generalization bounds are numerically much looser yet they provide better insights into the dependence of generalization on different network parameters. In Appendix~\ref{sec:determinisitic}, we build on the results of Theorem~\ref{thm:bound} to present a norm-based deterministic bound using module criticality.

In this section, we intuitively justified network criticality measure and related the generalization of a DNN to the network criticality measure in Corollary~\ref{cor:our_generalization_corollary}. In the next section, we empirically show that the network criticality measure is able to correctly rank the generalization performance of different architectures better than measures proposed earlier.
\section{Experiments} \label{sec:experiments}
We perform several experiments to compare our network criticality measure  to earlier complexity measures  in the literature. Our experiments are performed on the CIFAR10 and CIFAR100 datasets. For all experiments, implementation and architecture details are presented in Appendix~\ref{sec:experimental_details}. 

Table~\ref{table:quantitiesofinterest} summaries the quantities that are calculated in this section. The quantity SoSP was proposed by \cite{long2019size}. For the last two measures, we calculate $\sigma_i$ and $\alpha_i$ as per Definition~\ref{def:module_criticality}. In our experiments, each module is a single convolutional or linear layer. This represents a natural choice where each module is a linear transformation (with respect to the parameters). This choice also leads to the lowest number of modules such that each module is a linear transformation.

\begin{table}[h] 
\centering
\caption{Quantities of Interest}
\begin{tabular}{ |c|c|} %
\hline
Generalization Error (GE) &  $\mathcal{L}_D(f_{\Theta}) - \mathcal{L}_S(f_{\Theta})$ \\
\hline
Product of Frobenius Norms (PFN) & $\Pi_{i} \lv \theta_i^F \rv_{\text{Fr}}$\\
\hline
Product of Spectral Norms (PSN) & $\Pi_{i} \lv \theta_i^F \rv_{2}$\\
\hline
Distance to Initialization (DtI) & $\sum_{i} \lv \theta_i^0 -\theta_i^F\rv_{\text{Fr}}^2$\\
\hline
Number of Parameters (NoP) & Total number of parameters in the network\\
\hline
Sum of Spectral Norms (SoSP) & Total number of parameters $\times(\sum_{i} \lv \theta_i^0 -\theta_i^F\rv_{2} )$\\
\hline
\begin{tabular}{c} PAC Bayes (at error threshold 0.1) \end{tabular} & $\sum_{i} \lv \theta_i^0 -\theta_i^F \rv_{\text{Fr}}^2/\sigma_i^2$\\
\hline
\begin{tabular}{c}Network Criticality Measure \\ (at error threshold 0.1) \end{tabular} & $\sum_{i} \alpha_i^2 \lv \theta_i^0 -\theta_i^F \rv_{\text{Fr}}^2/\sigma_i^2$\\
\hline
\end{tabular}
 \label{table:quantitiesofinterest}
\end{table}

First, as a sanity check we use our complexity measure (lower is better) to compare between a ResNet18 trained on true labels and a ResNet18 trained on data where $20\%$ of the labels are randomly corrupted. As seen in Figure \ref{fig:rand_label_generalization_score}, our measure
 is able to correctly capture that the network trained with true label generalizes better than the one trained on corrupted labels (4.62\% error vs. 35\% error).

\begin{figure}%
    \centering
    \subfloat[Comparing ResNet18 on trained true labels vs. corrupted labels ]{{\includegraphics[width=0.4\linewidth]{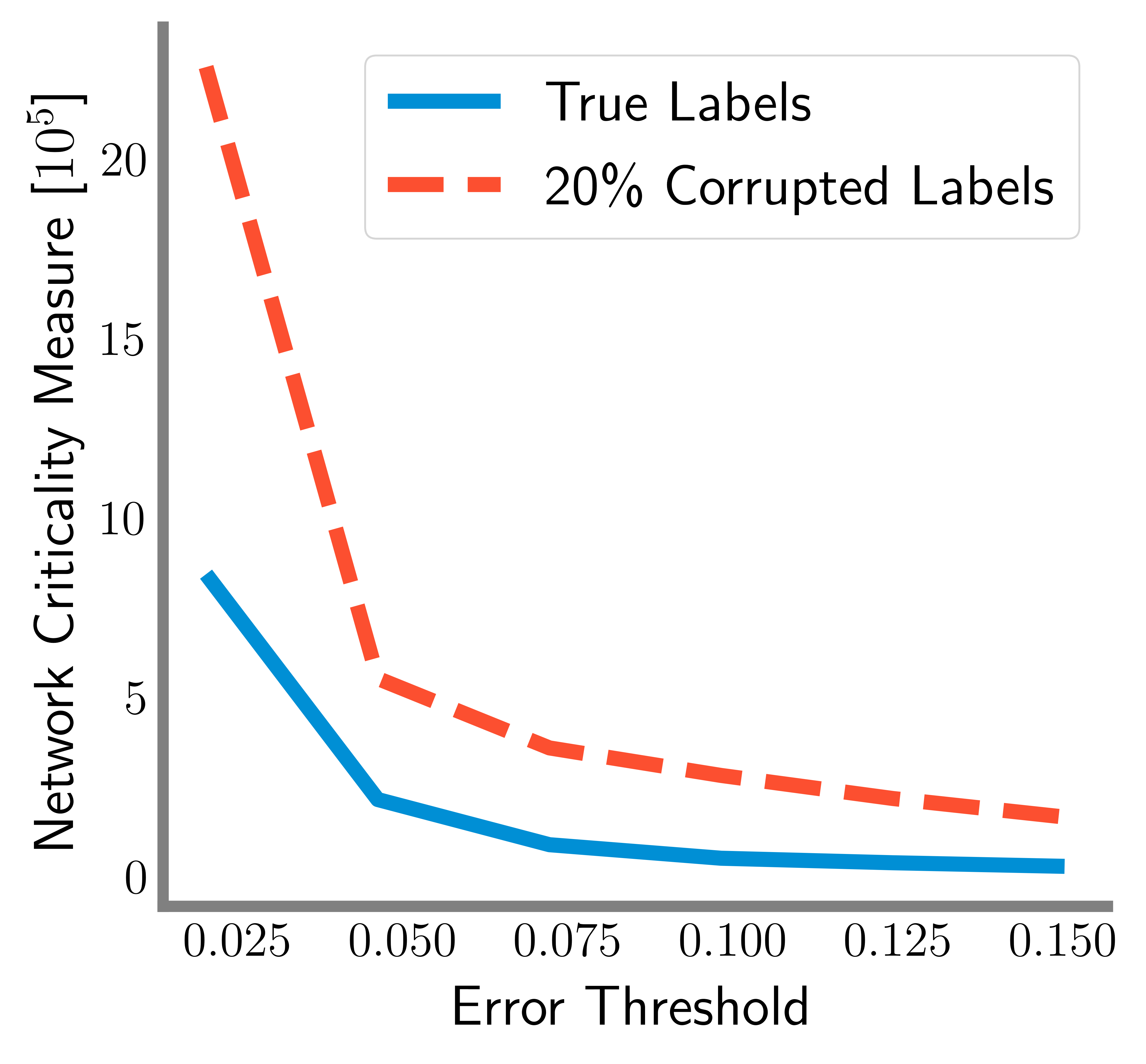} } 
    \label{fig:rand_label_generalization_score}}   \qquad
\subfloat[Comparing ResNet18, ResNet34, VGG16 and FCN (I).]
    {{\includegraphics[width=0.4\linewidth]{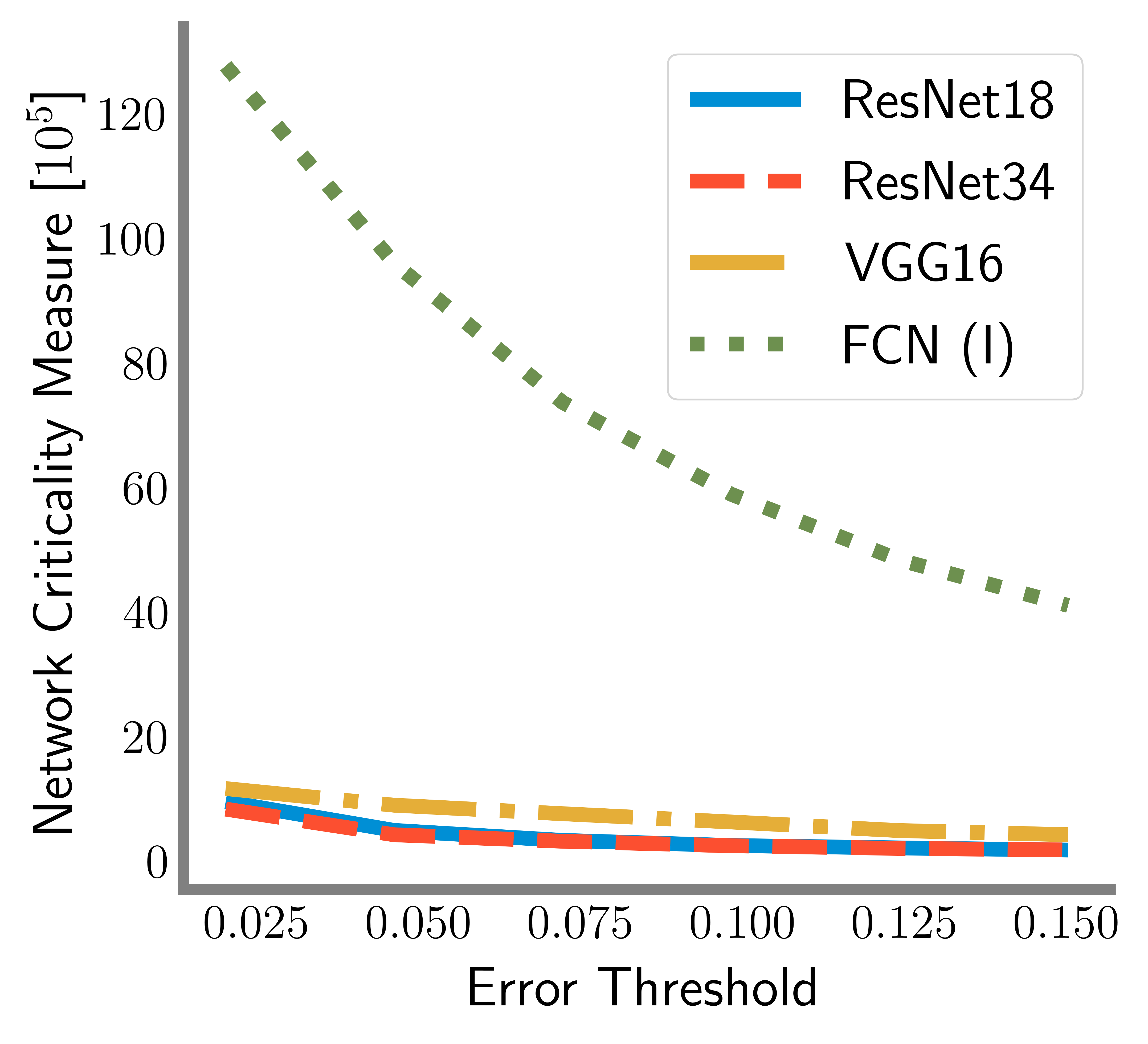} } 
    \label{fig:resnet_width_gen_score}}
    \qquad
    \caption{Network criticality as a function of error threshold for networks trained on CIFAR10.}
\end{figure}


Next, in Table~\ref{table:differentarchitectures} we compare the generalization performance of several conventional DNN architectures trained on the CIFAR10 dataset. There is a particular ranking of the networks based on their generalization error and it is desirable for a complexity measure to capture this ranking. Therefore, we compare the rankings proposed by network criticality measure and complexity measures from the literature with the empirical rankings obtained in the experiment.
 To do this, we calculate the \emph{Kendall's $\tau$ correlation coefficient}~\citep{kendall1938new} which is defined as follows:
\begin{align*}
\text{Kendall's } \tau = \frac{\# \text{ of pairs where the rankings agree} -\# \text{ of pairs where the rankings disagree} }{\# \text{ pairs}}.
\end{align*}
This coefficient lies between $-1$ and $1$, where $1$ denotes a high correlation between the two set of rankings.   Table~\ref{table:differentarchitectures} shows that the Kendall's $\tau$ coefficient between our network criticality measure and the generalization error is higher than all other complexity measures that we compared to.

We find that our measure correctly ranks the generalization performance of the networks --  ResNet18, ResNet101, VGG16 and FCN (I). It fails to correctly identify the correct rank of ResNet34, ResNet50, DenseNet121, VGG11 and FCN (II).

We also repeat the above experiment for the same networks trained on the CIFAR100 dataset (see Table \ref{table:differentarchitecturescifar100}). We note that network criticality measure correctly predicts the ranking of generalization performance for ResNet101, ResNet34, ResNet18, ResNet50, VGG16 and the 3-layer fully connected networks (FCN). We find that the generalization error of ResNet101 is the lowest which is correctly captured only by our complexity measure and not by any other measure. Further, the Kendall's $\tau$ correlation coefficient between the ranking based of the generalization error and our network criticality measure is 0.55 which is again higher than this coefficient for any other complexity measure. Our measure only fails to capture the ranking of VGG11 and DenseNet121 relative to the other DNN architectures and the relative ranking between FCN (I) and FCN (II).

\begin{table}[h] \caption{Measuring complexity of different architectures trained on CIFAR10.}
\centering
\begin{tabular}{ |c|c|c|c|c|c|c|c|c| } %
\hline
Network & GE & PFN & PSN & DtI& NoP & SoSP & PAC Bayes & Net. Criticality \\
\hline
ResNet18 & 4.61\% & 1e22    & 4e14 & 3430     & 1.1e7    &      1.3e9   & 6.9e5      & 2.2e5\\ 
ResNet34 & 6.3\% & 2e37  & 3e24 & 4768   & 2.1e7   &        3.7e9     & 9.1e5    &1.7e5\\ 
ResNet50 & 6.6\% & 4e56    & 4e20 & 10018      & 2.3e7 &    3.3e9      & 1.6e6      &1.8e5\\ 
ResNet101 & 6.4\% & 8e110 & 3e32 & 18730 & 4.2e7      &        9.8e9    &2.8e6 & 6.3e5\\
DenseNet121 & 7.8\% &   2e129      &       7e42    &       21359     &       6.8e6  &  2.0e9        &    1.2e6   &4.1e5  \\
VGG11 & 8.51\% & 1e11 & 1e6 & 2106              & 2.8e7        &      1.3e9    & 1.0e6               & 2.8e5\\
VGG16 & 7.47\% & 5e15 & 2e8 & 2341          & 3.4e7         &   2.1e9        & 1.2e6           & 2.70e5\\
FCN (I) & 29.83\% & 3e20 & 2e7 & 75221       & 2.0e7       &       4.6e8      &9.0e6         &5.7e6\\ 
FCN (II)  & 26.45\% & 3e21 & 1e7 & 81258      & 5.0e7      &        2.0e9    &9.5e6        &6.2e6\\ 
\hline
Kendall's $\tau$ & - & -0.22 & -0.33 & 0.38  & 0.16  &  -0.53 & 0.42  & 0.55 \\
\hline
\end{tabular}
 \label{table:differentarchitectures}
\end{table}

\begin{table}[h] \caption{Measuring complexity of different architectures trained on CIFAR100.}
\centering
\begin{tabular}{ |c|c|c|c|c|c|c|c|c| } %
\hline
Network & GE & PFN & PSN & DtI& NoP & SoSP& PAC Bayes & Net. Criticality \\
\hline
ResNet18 & 30.6\% & 2e22& 9e14 & 4855 & 1.1e7 & 1.4e9& 3.4e6 & 1.6e6\\ 
ResNet34 & 29.3\% & 1e37 & 1e22 & 6017 &2.1e7 & 3.6e9 & 6.7e6 & 1.4e6\\ 
ResNet50 & 31.1\% &  7e57 & 9e23 & 12715 & 2.3e7 & 4.1e9 & 5.8e6 & 1.9e6\\ 
ResNet101 & 25.5\% &  2e112 & 5e36 & 21233 & 4.2e7& 1.1e10 & 6.4e6 & 1.3e6\\
DenseNet121 & 35.3\%&  1e131  &     1e49      &    23702  &   6.9e6         &     2.4e9         &      4.5e6     &       2.6e6     \\
VGG11 & 43.5\% &        6e13           & 1e8& 5059 & 2.8e7 & 1.7e9 & 3.6e6 & 2.4e5\\
VGG16 & 32.69\% &     8e19    & 4e12 & 7010&3.4e7 & 3.6e9 & 8.1e6 & 4.6e6\\
FCN (I) & 57.59\% & 8e27 & 1e10  & 246636 &2.0e7 & 8.3e8 & 2.4e7 & 1.2e7\\ 
FCN (II)  & 53.17\% &  1e29 &  1e10 & 329296 &5.0e7 & 3.7e9 & 3.3e7 & 2.3e7\\ 
\hline
Kendall's $\tau$ & - &  -0.27& -0.47 & 0.33  & 0 & -0.42 & 0.22 &  0.55\\
\hline
\end{tabular}
 \label{table:differentarchitecturescifar100}
\end{table}

We also perform an additional set of experiments in Appendix~\ref{sec:additional_experiments} where we calculate these complexity measures on four ResNet18 networks with changing channel widths.
\section{Conclusion}
In this paper, we studied the module criticality phenomenon and proposed a complexity measure based on module criticality that is able to correctly predict the superior performance of some DNN architectures over others, for a specific task.
We believe module criticality can be used as a road-map for designing new task-specific architectures. Proposing new regularizers that improve generalization performance by bounding criticality  or spreading it among various modules of the network is an exciting direction for future work. Our measure could also be potentially used in architecture search where we could calculate this score over the training set to select architectures that generalize well on the unseen test set.
\subsection*{Acknowledgements}
We would like to thank Samy Bengio and Chiyuan Zhang for valuable conversations, and Yann Dauphin for his help with the implementation of Fixup initialization. We would also like to thank Chiyuan Zhang for  sharing the code for the paper ``Are all layers created equal?'' Part of this work was performed while the author NC was an intern at Google AI, Brain team. 
\newpage

\newpage
\flushleft{\textbf{\LARGE{Appendix}}}
\appendix
\section{Proof of Thereorm~\ref{thm:bound}}\label{sec:thm1-proof} 
We start by stating the PAC-Bayes theorem which bounds the generalization error of any posterior distribution $Q$ on parameters $\Theta$ that can be reached using the training set  given a prior distribution $P$ on parameters that should be chosen in advance and before observing the training set. Throughout this section given two scalar $p,q \in [0,1]$ let $\KL(p||q)$ denote the KL divergence between two Bernoulli distributions with success probabilities $p$ and $q$ respectively.

\begin{theorem}[\citet{mcallester1999pac}]
For any data distribution $D$, number of samples $m\in \sN$, training set $S_m\sim D$, and prior distribution $P$ on parameters $\Theta$, posterior distribution $Q$, for any $0 <  \delta$,    with probability $1-\delta$ over the draw of training data  we have that

\begin{align*}
\KL\bigg(\E_{\Theta\sim Q}[\Ls_S(f_\Theta)] \bigg|\bigg| \E_{\Theta\sim Q}[\Ls_D(f_\Theta)]\bigg) \leq \frac{\KL(Q||P) + \log \frac{m}{\delta}}{m-1}
\end{align*}
where $\KL$ is the Kullback-Leibler (KL) divergence~\citep{kullback1951information}.
\end{theorem}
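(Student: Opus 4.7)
My plan is to prove the PAC-Bayes bound via the standard two-ingredient recipe: (i) a change-of-measure inequality that converts an expectation under the data-dependent posterior $Q$ into one under the prior $P$, and (ii) a uniform-in-$\Theta$ concentration bound for the empirical risk obtained via Markov's inequality on a carefully chosen exponential moment. The crucial structural point is that the randomness in $S\sim D^m$ and in the prior draw $\Theta\sim P$ are independent, while $Q$ is allowed to depend on $S$; the change of measure is precisely what lets us exploit this.

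First I would invoke the Donsker--Varadhan variational formula (equivalently, the Gibbs change-of-measure inequality): for any measurable function $\phi$ and any distributions $Q\ll P$,
$$
\E_{\Theta\sim Q}[\phi(\Theta)] \leq \KL(Q\|P) + \log \E_{\Theta\sim P}\bigl[e^{\phi(\Theta)}\bigr].
$$
I would apply this with $\phi(\Theta) = (m-1)\,\KL\bigl(\Ls_S(f_\Theta)\,\|\,\Ls_D(f_\Theta)\bigr)$, which is measurable in $\Theta$ once $S$ is fixed. This step is purely analytic and is the place where the $\KL(Q\|P)$ term is born; it holds pointwise in $S$.

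Next I would control the moment generating function on the right-hand side in expectation over $S$. Conditioned on $\Theta$, $m\Ls_S(f_\Theta)$ is a sum of $m$ i.i.d. $\text{Bernoulli}(\Ls_D(f_\Theta))$ variables, so Maurer's classical moment bound for the binomial KL yields $\E_{S\sim D^m}\bigl[e^{(m-1)\KL(\Ls_S(f_\Theta)\|\Ls_D(f_\Theta))}\bigr]\leq m$ for every fixed $\Theta$. Since $P$ does not depend on $S$, Fubini lets me interchange the two expectations, so
$$
\E_{S}\,\E_{\Theta\sim P}\bigl[e^{(m-1)\KL(\Ls_S(f_\Theta)\|\Ls_D(f_\Theta))}\bigr] \leq m.
$$
A single application of Markov's inequality to this nonnegative random variable of $S$ then gives that, with probability at least $1-\delta$ over $S$,
$$
\E_{\Theta\sim P}\bigl[e^{(m-1)\KL(\Ls_S(f_\Theta)\|\Ls_D(f_\Theta))}\bigr] \leq \frac{m}{\delta}.
$$

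Finally I would assemble the pieces. On the $1-\delta$ event, plugging the Markov bound back into the change-of-measure inequality gives
$$
(m-1)\,\E_{\Theta\sim Q}\bigl[\KL(\Ls_S(f_\Theta)\|\Ls_D(f_\Theta))\bigr] \leq \KL(Q\|P) + \log(m/\delta).
$$
To convert the average-of-KL on the left into the KL-of-averages that appears in the theorem statement, I would invoke the joint convexity of $\KL(\cdot\|\cdot)$ in both arguments (Jensen's inequality), yielding
$$
\KL\bigl(\E_{\Theta\sim Q}[\Ls_S(f_\Theta)]\,\big\|\,\E_{\Theta\sim Q}[\Ls_D(f_\Theta)]\bigr) \leq \E_{\Theta\sim Q}\bigl[\KL(\Ls_S(f_\Theta)\|\Ls_D(f_\Theta))\bigr],
$$
and dividing by $m-1$ completes the proof. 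The main technical obstacle is Maurer's MGF bound for the Bernoulli KL, whose proof requires a careful combinatorial argument using Stirling-type estimates on binomial coefficients; I would cite it as a standard lemma rather than re-derive it, noting that the specific exponent $m-1$ (instead of $m$) is chosen precisely so that this MGF is at most $m$.
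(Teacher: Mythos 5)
Your proof is correct, but it is worth noting that the paper itself does not prove this statement at all: it is imported verbatim from \citet{mcallester1999pac} as a black-box ingredient, and the appendix proceeds directly to relaxing it via $\KL^{-1}(q|c)\leq q+\sqrt{c/2}$ and then bounding $\KL(Q||P)$ for the specific Gaussian prior and posterior. What you have written is the standard modern derivation of the PAC-Bayes-kl bound: Donsker--Varadhan change of measure applied to $\phi(\Theta)=(m-1)\,\KL(\Ls_S(f_\Theta)||\Ls_D(f_\Theta))$, the Maurer--Seeger moment bound on the binomial KL, Fubini plus Markov over the draw of $S$, and joint convexity of the Bernoulli KL to pass from the $Q$-average of KLs to the KL of $Q$-averages. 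All four steps are sound, and you correctly flag the two structural subtleties (Donsker--Varadhan holds pointwise in $S$, so it applies to the data-dependent $Q$; the prior's independence from $S$ is what licenses the Fubini interchange). The only quibble is cosmetic: the elementary argument $\binom{m}{k}(k/m)^k(1-k/m)^{m-k}\leq 1$ gives $\E_S[e^{m\,\KL(\hat p||p)}]\leq m+1$, and obtaining the constant $m$ with exponent $m-1$ exactly as stated requires the slightly sharper version of the lemma; since you cite it rather than re-derive it, and the theorem is anyway used in the paper only up to an $\widetilde{\mathcal{O}}(1)$ slack, this does not affect anything downstream.
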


Following \citet{dziugaite2017computing}, we use the inequality $\KL^{-1}(q|c)  = \sup \left\{p\in [0,1] : \KL(q||p) \leq c\right\} \leq q+\sqrt{c/2}$ to achieve a simple bound on the test error:
\begin{align*}
\E_{\Theta\sim Q}[\Ls_D(f_\Theta)]& \leq \KL^{-1}\left(E_{\Theta\sim Q}[\Ls_S(f_\Theta)]\bigg| \frac{\KL(Q||P) + \log \frac{m}{\delta}}{m-1}\right) \\ \notag
&\leq \E_{\Theta\sim Q}[\Ls_S(f_\Theta)] + \sqrt{\frac{\KL(Q||P) + \log \frac{m}{\delta}}{2(m-1)}}.
\end{align*}

The intuition from Figure~\ref{fig:convcomb} suggests that moving the parameters of each module as close as possible to the initialization value before harming performance.  For such $\alpha_i$, we can then define the posterior $Q_i$ for module $i$ to be a Gaussian distribution centered at $\theta_i^\alpha$ with covariance matrix $\sigma_i^2I$, that is, as if we have additive noise $u_i\sim \gN(0,\sigma_iI)$. We use $\Theta^\alpha$ to refer to the case where all $\theta_i$ are replaced with $\theta_i^{\alpha_i}$ and matrix $U$ includes all $u_i$.
Then the training loss term can be decomposed as 

\begin{align*}
\E_{\Theta\sim Q}[\Ls_S(f_\Theta)] &= \E_{u_i\sim \gN(0,\sigma_iI)}[\Ls_S(f_{\Theta^\alpha+U})]\\
&\leq \Ls_S(f_{\Theta^F}) + \bigg| \E_{u_i \sim \gN(0,\sigma_iI)}[\Ls_S(f_{\Theta^\alpha+U})] - \Ls_S(f_{\Theta^F})\bigg|,
\end{align*}
where the second term on the right hand side of the inequality captures the flatness of the point $\Theta^\alpha$ by adding Gaussian noise and measuring the change in the loss. Therefore, searching over the posterior corresponds to finding a flat solution in the valley that connects the initial and final points. Next, we use this intuition to prove a generalization bound based on module criticality. 

First, we express the value of the $\widetilde{\mathcal{O}}(1)$ term in Theorem~\ref{thm:bound} , which is equal to $\eps$ as follows:
\begin{align} \label{eqn:eps}
	\eps=\sum_i\log\left(7m +2 \log\left(\frac{k_i}{k_i\sigma_{i}^2 + \alpha_i^2\norm{\theta_i^F-\theta_i^0}_\F^2}\right)\right).
	\end{align}
Now we proceed with the proof.

The KL-divergence between two $k$-dimensional Gaussian distributions is given by the formula:
\begin{equation*}
\KL(\gN(\mu_1,\Sigma_1)||\gN(\mu_P,\Sigma_P)) = \frac{1}{2}\left[\tr\left(\Sigma_2^{-1}\Sigma_1\right)+ \left(\mu_2-\mu_1\right)^\top \Sigma_2^{-1}\left(\mu_2-\mu_1\right)- k + \ln(\frac{\det \Sigma_2}{\det \Sigma_1})\right].
\end{equation*}

The above equation can be further simplified for Gaussian distributions with diagonal covariance matrices. Let the prior $P$ be a Gaussian distribution such that for each module $i$, the distribution is $\gN(\theta^0_i, \sigma_{P,i}^2I)$ and let the posterior $Q$ be a Gaussian distribution such that for each module $i$, the distribution is $\gN((1-\alpha)\theta^0_i + \alpha\theta_i^F),\sigma_{Q,i}^2I)$. We can then write the KL-divergence $\KL(Q||P)$ as
\begin{align}
\KL(Q||P) = \frac{1}{2}\sum_i\left[\frac{k_i\sigma_{Q,i}^2 + \alpha_i^2\norm{\theta_i^F-\theta_i^0}_\F^2}{\sigma_{P,i}^2} -k_i +k_i\log\left(\frac{\sigma_{P,i}^2}{\sigma_{Q,i}^2}\right) \right].
\end{align}

Since prior should be decided before observing the training set, we are not allowed to optimize for $\sigma_{P,i}$ directly. However, one can optimize for $\sigma_{P,i}$ over a pre-defined set of values and use a union bound argument to get the generalization bound for the best $\sigma_{P,i}$ in that set. We use a covering approach suggested by \citet{langford2002not}. For $b,\eps>0$, if one chooses the variance of prior to be  $\exp(-\eps j+b)$ for $j\in \sN$ such that for each $j$ the bound holds with probability$1-\frac{6}{\pi^2j^2}$, then all bounds hold with probability $1-\sum_{j\in \sN}\frac{6}{\pi^2j^2}=1-\delta$. We can apply the same idea to every module such that the bound holds with probability $1-\delta\prod_{i=1}^d\frac{6}{\pi^2j_i^2}$.

If we choose $\sigma_{Q,i}^2\leq 1$ then we have $\sigma_{P,i} \leq \exp\left(\frac{4m}{k_i}+1\right)$. Otherwise, the bound holds since the right hand side is greater than one. Given (from \eqref{eqn:eps}) $\eps\geq 0$, if we choose $\sigma^2_{P,i}$ to have the form $\exp\left(\frac{4m-j_i}{k_i}+1\right)$, for some integer $j_i$, we can always find choose $j_i$ such that 

\begin{align}\label{eq:choose-j}
k_i\sigma_{Q,i}^2 + \alpha_i^2\norm{\theta_i^F-\theta_i^0}_\F^2 \leq k_i\sigma_{P,i}^2 \leq \exp(1/k_i)\left(k_i\sigma_{Q,i}^2 + \alpha_i^2\norm{\theta_i^F-\theta_i^0}_\F^2\right).
\end{align}
Therefore, the KL-divergence can be bounded as 
\begin{align*}
\KL(Q||P) &\leq  \frac{1}{2}\sum_i\left[k_i\frac{k_i\sigma_{Q,i}^2 + \alpha_i^2\norm{\theta_i^F-\theta_i^0}_\F^2}{k_i\sigma_{Q,i}^2 + \alpha_i^2\norm{\theta_i^F-\theta_i^0}_\F^2} -k_i +k_i\log\left(\frac{\sigma_{P,i}^2}{\sigma_{Q,i}^2}\right) \right]\\
&= \frac{1}{2}\sum_i\left[ k_i\log\left(\frac{\sigma_{P,i}^2}{\sigma_{Q,i}^2}\right) \right]\\
&\leq \frac{1}{2}\sum_i k_i \log\left(\frac{\exp(1/k_i)\left(k_i\sigma_{Q,i}^2 + \alpha_i^2\norm{\theta_i^F-\theta_i^0}_\F^2\right)}{k_i\sigma_{Q,i}^2}\right)\\
&\leq  \frac{1}{2}\sum_i k_i \log\left(\frac{\exp(1/k_i)\left(k_i\sigma_{Q,i}^2 + \alpha_i^2\norm{\theta_i^F-\theta_i^0}_\F^2\right)}{k_i\sigma_{Q,i}^2}\right) \\
&\leq  \frac{1}{2}\sum_i 1+k_i \log\left(1 + \frac{ \alpha_i^2\norm{\theta_i^F-\theta_i^0}_\F^2}{k_i\sigma_{Q,i}^2}\right).
\end{align*}
Note that in order to achieve the inequality in \eqref{eq:choose-j}, $j_i$ should be chosen as 
\begin{align*}
j_i= \left\lfloor\frac{4m}{k_i}+1 +\log\left(\frac{k_i}{k_i\sigma_{Q,i}^2 + \alpha_i^2\norm{\theta_i^F-\theta_i^0}_\F^2}\right)\right\rfloor \leq 5m +\log\left(\frac{k_i}{k_i\sigma_{Q,i}^2 + \alpha_i^2\norm{\theta_i^F-\theta_i^0}_\F^2}\right) .
\end{align*}

Given that each such bound should hold with probability $1-\delta\prod_{i=1}^d\frac{6}{\pi^2j_i^2}$, the log term in the bound can be written as
\begin{align*}
\log \frac{m}{\delta} + \sum_i\log(\pi^2j_i^2/6)&\leq \log \frac{m}{\delta} + 2\sum_i\log\left(7m +2 \log\left(\frac{k_i}{k_i\sigma_{Q,i}^2 + \alpha_i^2\norm{\theta_i^F-\theta_i^0}_\F^2}\right)\right).
\end{align*}
Putting everything together proves the theorem statement.

\section{A deterministic generalization bound for convolutional networks} \label{sec:determinisitic}
We start by stating a generalization bound given by \citet{neyshabur2018pac} with a slight improvement in the constants.
\begin{lemma}[\cite{neyshabur2018pac}]\label{thm:det-pac}
Let $f_\Theta:\gX\rightarrow \sR^C$ be any predictor function with parameters $\Theta$ and $P$ be a prior distribution on parameters $\Theta$. Then for any $\gamma, m, \delta>0$, with probability $1-\delta$ over the training set $S$ of size $m$, for any parameter $\Theta$ and any perturbation distribution $Q$ over parameters such that $\sP_{U\sim Q}\left[\max_{x\in \gX}\abs{f_{\Theta+U}(x)-f_{\Theta}(x)} \leq \frac{\gamma}{4}\right]\geq \frac{1}{2}$,  we have
\begin{align*}
	\Ls_D(f_\Theta) \leq \Ls_{S,\gamma}(f_\Theta) + \sqrt{\frac{2\KL(\Theta+U||P)+1 + \log\frac{m}{\delta}}{2(m-1)}}.
\end{align*}
\end{lemma}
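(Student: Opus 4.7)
The plan is to combine McAllester's PAC-Bayes theorem applied to a margin-$\gamma/2$ surrogate loss with a deterministic \emph{derandomization} step that exploits the $\gamma/4$-output-stability hypothesis on $Q$. This is the standard recipe of \citet{neyshabur2018pac} and \citet{pitas2017pac}; the only work is to track constants carefully so that the stated $\sqrt{(2\KL+1+\log(m/\delta))/(2(m-1))}$ falls out rather than a looser form.

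First I would apply McAllester's theorem (in the $\KL^{-1}$ form already used in the proof of Theorem~\ref{thm:bound}, combined with $\KL^{-1}(q|c)\leq q+\sqrt{c/2}$) with prior $P$, posterior given by $\Theta+U$ for $U\sim Q$, and the margin-$\gamma/2$ zero-one loss in place of the $0$-$1$ loss. With probability $1-\delta$ over $S$ this produces
\begin{equation*}
\E_{U}[\Ls_{D,\gamma/2}(f_{\Theta+U})] \leq \E_{U}[\Ls_{S,\gamma/2}(f_{\Theta+U})] + \sqrt{\tfrac{\KL(\Theta+U||P)+\log(m/\delta)}{2(m-1)}}.
\end{equation*}
In parallel, a one-line calculation shows that whenever $\max_x\abs{f_{\Theta+U}(x)-f_\Theta(x)}_\infty\leq\gamma/4$, the multiclass margin $M_{\Theta'}(x,y)=f_{\Theta'}(x)[y]-\max_{j\neq y}f_{\Theta'}(x)[j]$ changes by at most $\gamma/2$, giving the pointwise sandwich $\mathbf{1}\{M_\Theta\leq 0\}\leq\mathbf{1}\{M_{\Theta+U}\leq\gamma/2\}\leq\mathbf{1}\{M_\Theta\leq\gamma\}$.

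To derandomize, I let $B$ be the event on which the sandwich holds; the hypothesis gives $Q(B)\geq 1/2$. The cleanest route is to pass to the conditional posterior $Q'=Q(\cdot\mid B)$, which is still data-independent and hence admissible in McAllester's bound. Under $Q'$ the sandwich holds almost surely, so integrating against $Q'$ yields $\Ls_D(f_\Theta)\leq \E_{U\sim Q'}[\Ls_{D,\gamma/2}(f_{\Theta+U})]$ and $\E_{U\sim Q'}[\Ls_{S,\gamma/2}(f_{\Theta+U})]\leq \Ls_{S,\gamma}(f_\Theta)$. Plugging these into the PAC-Bayes inequality of Step~1, now applied with posterior $Q'$, gives $\Ls_D(f_\Theta)\leq \Ls_{S,\gamma}(f_\Theta)+\sqrt{(\KL(Q'||P)+\log(m/\delta))/(2(m-1))}$.

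The main obstacle is bounding $\KL(Q'||P)$ in terms of $\KL(Q||P)=\KL(\Theta+U||P)$ with the exact constants stated. A short direct computation from $dQ'/dP=(\mathbf{1}_B/Q(B))(dQ/dP)$ together with $Q(B)\geq 1/2$ gives $\KL(Q'||P)\leq 2\KL(Q||P)+\log 2$, and absorbing $\log 2\leq 1$ produces the $2\KL+1$ inside the square root. Two small pitfalls to watch for: (i)~in the conditional-KL identity the contribution $\int_{B^c}\log(dQ/dP)\,dQ$ can be negative, so the bound $\int_B\log(dQ/dP)\,dQ\leq \KL(Q||P)$ needs either a separate nonnegativity argument or the use of the Donsker--Varadhan variational identity in its place; (ii)~McAllester's bound requires the posterior to be chosen independently of $S$, which is automatic here since $B$ depends only on $\Theta$, $Q$, and $f$, not on the training set. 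Modulo these technicalities, the remaining manipulations are purely algebraic.
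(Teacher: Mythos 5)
The paper does not actually prove this lemma---it is cited from \citet{neyshabur2018pac} with a remark about ``a slight improvement in the constants''---so there is no in-paper proof to compare against. Your recipe (PAC-Bayes at margin $\gamma/2$, then derandomize by conditioning the posterior on the high-probability event $B$ that the perturbation is $\gamma/4$-small) is exactly the standard Neyshabur--Bhojanapalli--Srebro argument, so the structure is right. Two issues, one minor and one substantive.

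The minor issue is your remark (ii): McAllester's theorem requires the \emph{prior}, not the posterior, to be fixed independently of $S$. The posterior---and hence the event $B$, even though it is defined through the data-dependent $\Theta$---is free to depend on the training set, because the bound holds uniformly over all posteriors. Your conclusion (the conditioning is admissible) is correct, but the justification is backwards.

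The substantive gap is the claimed inequality $\KL(Q'\,\|\,P)\leq 2\KL(Q\,\|\,P)+\log 2$. You flag that $\int_{B^c}\log(dQ/dP)\,dQ$ can be negative but do not resolve it, and the resolution you hint at cannot deliver that constant. Writing $Z=Q(B)\geq 1/2$, the identity
\begin{align*}
\KL(Q'\,\|\,P)=\log\tfrac{1}{Z}+\tfrac{1}{Z}\int_B\log\tfrac{dQ}{dP}\,dQ
=\log\tfrac{1}{Z}+\tfrac{1}{Z}\Bigl[\KL(Q\,\|\,P)-\int_{B^c}\log\tfrac{dQ}{dP}\,dQ\Bigr]
\end{align*}
together with the log-sum lower bound $\int_{B^c}\log(dQ/dP)\,dQ\geq Q(B^c)\log\bigl(Q(B^c)/P(B^c)\bigr)\geq(1-Z)\log(1-Z)$ gives, after maximizing over $Z\in[1/2,1]$,
\begin{align*}
\KL(Q'\,\|\,P)\leq 2\,\KL(Q\,\|\,P)+2\log 2,
\end{align*}
and $2\log 2\approx 1.386>1$. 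So the argument you outline delivers $2\KL+2\log 2$, not $2\KL+1$; the stated ``$+1$'' is not reachable by this conditioning route, and I do not see a tightening that recovers it. Since the paper itself supplies no proof, this may well be a slip in the paper's claimed constant rather than a fixable gap on your end, but you should not assert the $+\log 2$ bound---it is false as written, and even the corrected $+2\log 2$ overshoots the target.
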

The lemma above gives a data-independent deterministic bound which depends on the maximum change of the output function over the domain after a perturbation. We combine Lemma~\ref{thm:det-pac} with Theorem~\ref{thm:bound} and prove a bound on the perturbation which leads to the following theorem.
\begin{theorem}\label{thm:det-bound}
Let input $x$ be an $N \times N$ image whose norm is bounded by $B$, $f_\Theta:\gX\rightarrow \sR^C$ be the predictor function with parameters $\Theta$ which is a DNN of depth $d$ made of convolutional blocks. Then for any  margin $\gamma$, sample size $m$, $\delta > 0$, with probability $1-\delta$ over the training set $S$,  any parameter $\Theta$ and any $\alpha_i>0$ such that $\max_{x\in \gX}\abs{f_{\Theta}(x)-f_{\Theta^\alpha}(x)} \leq \frac{\gamma}{8}$, we have
\begin{equation}
\Ls_D(f_\Theta)\leq \Ls_{S,\gamma}(f_\Theta) + \sqrt{\frac{\sum_{i=1}^d k_i \log\left(1 + \frac{[32e dB\alpha_i\norm{\theta_i^F-\theta_i^0}_\F \prod_{i\neq j} \norm{\theta_i^\alpha}_2 \sqrt{\log(4dN^2)}]^2}{c_i\gamma^2 }\right) +\log\left(\frac{m}{\delta}\right)+\widetilde{\mathcal{O}}(1)}{m-1}},
\end{equation}
where $k_i$ is the number of parameters in module $i$. For example, for a convolution module with kernel size $q_i \times q_i$ and number of output channels $c_i$,  $k_i = q_i^2 c_{i-1} c_i$.
\end{theorem}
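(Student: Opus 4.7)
The plan is to apply the PAC-Bayesian framework of Lemma~\ref{thm:det-pac}, choosing the prior $P$ to be a product of per-module Gaussians centered at the initialization $\theta_i^0$ with variance $\sigma_{P,i}^2 I$, and the posterior $Q$ to be a product of per-module Gaussians centered at $\theta_i^\alpha = (1-\alpha_i)\theta_i^0 + \alpha_i \theta_i^F$ with variance $\sigma_i^2 I$. The $\sigma_i$ need to be chosen so that the max-output-perturbation condition of Lemma~\ref{thm:det-pac} is satisfied, i.e. $\max_{x\in\gX}|f_{\Theta^\alpha+U}(x)-f_{\Theta}(x)|\le \gamma/4$ with probability at least $1/2$ under $U\sim Q$. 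Splitting by the triangle inequality,
\begin{equation*}
|f_{\Theta^\alpha+U}(x)-f_{\Theta}(x)| \le |f_{\Theta^\alpha+U}(x)-f_{\Theta^\alpha}(x)| + |f_{\Theta^\alpha}(x)-f_{\Theta}(x)|,
\end{equation*}
the second term is at most $\gamma/8$ by hypothesis, so it suffices to ensure the first term is $\le \gamma/8$ with probability $\ge 1/2$.

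The main work, and what I expect to be the hardest step, is the perturbation bound on the first term for a convolutional network. I would proceed by a hybrid-layer argument: peel the perturbations one module at a time, so that the total output change is bounded by the sum, over $i$, of the operator norm of the Gaussian noise in module $i$ amplified by the product of operator norms of the other (perturbed or unperturbed) modules and the input norm $B$. The per-module perturbed operator norm is controlled by $\|\theta_i^\alpha\|_2+\|u_i\|_{\text{op}}$, which, with high probability, is close to $\|\theta_i^\alpha\|_2$. The delicate ingredient is bounding $\|u_i\|_{\text{op}}$ when $u_i$ is an i.i.d.\ Gaussian convolution kernel acting on $N\times N$ inputs: using the characterization of singular values of convolutional layers (as in Sedghi et al.\ and the analysis of Pitas et al.), together with a Gaussian concentration/union bound over the $dN^2$ frequency blocks, one gets $\|u_i\|_{\text{op}} \lesssim \sigma_i \sqrt{c_i \log(4dN^2)}$ with the required probability. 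Combining over the $d$ layers yields a deterministic upper bound of the form $d\cdot B\cdot \sigma_i \sqrt{c_i \log(4dN^2)} \prod_{j\neq i}\|\theta_j^\alpha\|_2$ on the contribution of each module to the output perturbation.

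Demanding that each of the $d$ contributions be at most $\gamma/(8d)$ leads to the choice
\begin{equation*}
\sigma_i^2 \;\lesssim\; \frac{\gamma^2}{d^2\, B^2\, c_i \log(4dN^2)\, \prod_{j\neq i}\|\theta_j^\alpha\|_2^2},
\end{equation*}
which gives the required high-probability control on the output perturbation. I would then plug this $\sigma_i$ into the KL computation developed for Theorem~\ref{thm:bound}: after the same covering-over-$\sigma_{P,i}$ trick that discretizes the prior variance on a geometric grid, the KL contribution of module $i$ is bounded by $\tfrac{1}{2}k_i \log\!\bigl(1+\alpha_i^2\|\theta_i^F-\theta_i^0\|_{\text{Fr}}^2/(k_i\sigma_i^2)\bigr)$ up to lower-order additive terms.

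Finally, substituting the chosen $\sigma_i$ into this KL bound turns each summand into $k_i\log\!\bigl(1+[32e\,dB\,\alpha_i\|\theta_i^F-\theta_i^0\|_{\text{Fr}} \prod_{j\neq i}\|\theta_j^\alpha\|_2 \sqrt{\log(4dN^2)}]^2 / (c_i\gamma^2)\bigr)$ after absorbing the numerical constants (the $32e$ coming from the concentration constants and the margin-splitting factor) and using $k_i = q_i^2 c_{i-1} c_i$ to rewrite the denominator as $c_i$. Plugging this bound for $\KL(Q\|P)$ into the conclusion of Lemma~\ref{thm:det-pac}, and noting that Lemma~\ref{thm:det-pac} already dominates $\Ls_D(f_\Theta)$ by $\Ls_{S,\gamma}(f_\Theta)$ plus the square-root term, produces the claimed bound. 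The $\widetilde{\mathcal{O}}(1)$ absorbs the covering log factors (analogous to the $\eps$ term displayed in \eqref{eqn:eps}) and the $+1$ from the generic PAC-Bayes constant.
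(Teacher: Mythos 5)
Your proposal follows essentially the same route as the paper's proof: prior Gaussians at $\theta_i^0$, posterior Gaussians at $\theta_i^\alpha$, the Neyshabur et al.\ layer-peeling perturbation bound combined with the Pitas et al.\ Gaussian operator-norm bound for convolutions, the resulting choice $\sigma_i \propto \gamma/(dB\prod_j\|\theta_j^\alpha\|_2 q_i\sqrt{c_i\log(4dN^2)})$, and then substitution into the covering-based KL bound already developed for Theorem~\ref{thm:bound}. The only differences are presentational and immaterial: you reach the $\gamma/4$ perturbation condition via a single triangle inequality splitting $\gamma/8 + \gamma/8$ (which in fact lands cleanly on $\Ls_{S,\gamma}(f_\Theta)$ as stated), whereas the paper chains margin-loss inequalities through $\Ls_{D,\gamma/4}$ and $\Ls_{S,3\gamma/4}$; and your intermediate operator-norm estimate omits the $q_i$ factor that the paper carries and later cancels when rewriting $k_i\sigma_i^2$ in terms of $c_i\gamma^2$.
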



\begin{proof}
First, we express the value of the $\widetilde{\mathcal{O}}(1)$ term in Theorem~\ref{thm:det-bound} , which is equal to $\eps_2$ as follows:
	\begin{align} \label{eqn:eps2}
	\eps_2=1+\sum_i\log\left(7m +2 \log\left(\frac{k_i}{k_i\gamma^2/\left(16e\prod_{j\neq i} \norm{\theta_{i}^\alpha}_2\log(4dN^2)\right)^2 + \alpha_i^2\norm{\theta_i^F-\theta_i^0}_\F^2}\right)\right).
	\end{align}
We  note that for any $\Theta,\Theta'$, if $\max_{x\in \gX}\norm{f_\Theta(X)-f_\Theta}_\infty \leq \gamma/2$ then $\Ls(f_\Theta)\leq \Ls_{\gamma}(f_\Theta')$. The reason is that the output for each class can change by at most $\gamma/2$ and therefore the label can only change for the data points that are within $\gamma$ of the margin. 

We start using the assumptions on the perturbation bound. Combining the results from Theorem~\ref{thm:det-pac} and Theorem~\ref{thm:bound}, we can get the following bound.
\begin{align} \label{eqn:loss-ineq}
\Ls_D(f_{\Theta^F}) &\leq \Ls_{D,\frac{\gamma}{4}}(f_{\Theta^\alpha})\\ \nonumber
&\leq \Ls_{S,\frac{3\gamma}{4}}(f_{\Theta^\alpha}) + \sqrt{\frac{\frac{1}{2}\sum_{i=1}^d k_i \log\left(1 + \frac{ \alpha_i^2\norm{\theta_i^F-\theta_i^0}_\F^2}{k_i\sigma_{i}^2}\right) +\log \frac{m}{\delta}+\eps_2}{m-1}}\\ \nonumber
&\leq \Ls_{S,\gamma}(f_{\Theta^\alpha}) + \sqrt{\frac{\frac{1}{2}\sum_{i=1}^d k_i \log\left(1 + \frac{ \alpha_i^2\norm{\theta_i^F-\theta_i^0}_\F^2}{k_i\sigma_{i}^2}\right) +\log \frac{m}{\delta}+\eps_2}{m-1}}
\end{align}
where $\eps_2$ is given above in \eqref{eqn:eps2}.

Therefore, it suffices to find the value of $\sigma_i$ under which the assumption on norm of perturbation in function space holds and then simplify the following upper bound given the desired value of $\sigma_i$.

In order to find the desired value of $\sigma_i$ we use the following two lemmas. First we  adopt the perturbation lemma by \citet{neyshabur2018pac} to bound the change in the output a network based on the magnitude of the perturbation:
\begin{lemma}[\citet{neyshabur2018pac}]
Let norm of input $x$ be bounded by $B$. For any $B > 0$, let $f_\Theta:\gX\rightarrow \sR^C$ be a neural network with ReLU activations and depth $d$. Then for any $\Theta , x \in  \gX,$ and any perturbation $U$ s.t. $\Vert u_i \Vert_2 \leq \Vert \theta_i \Vert_2$, the change in the output of the network can
be bounded as follows
	\begin{align} \label{eq:pbound}
	\norm{f_{\Theta+U} - f_\Theta}_2 \leq e B \prod_{i=1}^d \norm{\theta_i}_2 \sum_{j=1}^d \frac{\norm{u_i}_2}{\norm{\theta_i}_2}.
	\end{align}
\end{lemma}

We next use the following lemma by \citet{pitas2017pac} that bounds the magnitude of the Gaussian perturbation $u_i$ for each convolutional module based on the standard deviation of the perturbation. 

\begin{lemma}[\citet{pitas2017pac}] Let $u_i$ be a Gaussian perturbation for each module $i$ of a convolutional model. Let $N$ be the image size, $q_i$ , $c_i$ be the kernel size and the number of output channels at module $i$ respectively. We have that
	\begin{align*}
	\sP\left[\norm{u_i}_2 \geq \sigma_i\big(q_i(2\sqrt{c_i})+t\big)\right] \leq 2N^2e^{-\frac{t^2}{2q_i^2}}.
	\end{align*}
\end{lemma}
The lemma above suggests that by taking union bounds over all modules, we can ensure that with probability $1/2$ we have that for any module $i$, the following upper bound on the spectral norm of the perturbation holds.
\begin{align*}
\norm{u_i}_2 &\leq \sigma_iq_i(2\sqrt{c_i}+\sqrt{2\log(4dN^2)})
\leq 2\sigma_iq_i(\sqrt{c_i}+\sqrt{\log(4dN^2)})
\leq 4\sigma_iq_i\sqrt{c_i\log(4dN^2)}.
\end{align*}
Combining this with perturbation bound in Equation~\ref{eq:pbound}, we have that
\begin{align*}
\norm{f_{\Theta^\alpha+U} - f_{\Theta^\alpha}}_2 \leq e B \sum_{i=1}^d \norm{ u_i}_2\prod_{j\neq i}^d \norm{\theta_j^\alpha}_2 
\leq 4e B \sum_{i=1}^d \sigma_iq_i\sqrt{c_i\log(4dN^2)} \prod_{j\neq i}^d \norm{\theta_i^\alpha}_2\leq \frac{\gamma}{8},
\end{align*}
where the last inequality can be achieved with
\begin{align} \label{eq:sigma}
\sigma_i =  \frac{\gamma }{32e dB \prod_{i=1}^d \norm{\theta_i^\alpha}_2 q_i\sqrt{c_i\log(4dN^2)}}.
\end{align}
Therefore, this value for $\sigma_i$ ensures the assumption on norm of perturbation in function space in Theorem~\ref{thm:det-bound} holds and hence completes the proof.

Moreover, we show how we get the value of $\epsilon_2$, by showing the simplification from inserting the value for  $\sigma_i$ from Equation~\ref{eq:sigma}  as follows.
\begin{align*}
\log\left(1 + \frac{ \alpha_i^2\norm{\theta_i^F-\theta_i^0}_\F^2}{k_i\sigma_{i}^2}\right) &\leq \log\left(1 + \frac{ \left[\alpha_i\norm{\theta_i^F-\theta_i^0}_\F\right]^2}{q_i^2c_i^2\sigma_{i}^2}\right)\\
&\leq \log\left(1 + \frac{[32e dB\alpha_i\norm{\theta_i^F-\theta_i^0}_\F \prod_{i=1}^d \norm{\theta_i^\alpha}_2 q_i\sqrt{c_i\log(4dN^2)}]^2}{q_i^2c_i^2\gamma^2}\right)\\
&=\log\left(1 + \frac{[32e dB\alpha_i\norm{\theta_i^F-\theta_i^0}_\F \prod_{i=1}^d \norm{\theta_i^\alpha}_2 \sqrt{\log(4dN^2)}]^2}{c_i\gamma^2 }\right).
\end{align*}
Then, $\eps_2$ can also be simplified as follows.
\begin{align*}
\eps_2 &=1 + \sum_i\log\left(7m +2 \log\left(\frac{k_i}{k_i\sigma_{i}^2 + \alpha_i^2\norm{\theta_i^F-\theta_i^0}_\F^2}\right)\right)\\
&\leq 1+  \sum_i\log\left(7m +2 \log\left(\frac{k_i}{k_i\gamma^2/\left(32e dB \prod_{i=1}^d \norm{\theta_i^\alpha}_2 q_i\sqrt{c_i\log(4dN^2)}\right)^2 + \alpha_i^2\norm{\theta_i^F-\theta_i^0}_\F^2}\right)\right)\\
&\leq 1 + \sum_i\log\left(7m +2 \log\left(\frac{k_i}{\gamma^2/\left(32e dB \prod_{i=1}^d \norm{\theta_i^\alpha}_2 \sqrt{\log(4dN^2)}\right)^2 + \alpha_i^2\norm{\theta_i^F-\theta_i^0}_\F^2}\right)\right)\\
&\leq 1+  \sum_i\log\left(7m +2 \log(k_i) -4\log\left(\alpha_i\norm{\theta_i^F-\theta_i^0}_\F+\gamma/32e dB \prod_{i=1}^d \norm{\theta_i^\alpha}_2 \sqrt{\log(4dN^2)}\right)\right).
\end{align*}
\end{proof}

\section{Details on experimental set-up}
\label{sec:experimental_details}
For all our experiments, we use the CIFAR10 and CIFAR100 datasets. To train our networks we used Stochastic Gradient Descent (SGD) with momentum $0.9$ to minimize multi-class cross-entropy loss. On CIFAR10 each model is trained until the cross-entropy loss on the training dataset falls below 0.19. While on CIFAR100 each model is trained until the cross-entropy loss on the training dataset falls below 0.25. The ResNets, DenseNets and VGGs were trained using a stage-wise constant learning rate scheduling with a starting learning rate of 0.1  and with a decrease by a multiplicative factor of 0.2 every 60 epochs. FCN was trained with an initial learning rate of 0.1 with a decrease by a multiplicative factor of 0.2 every 200 epochs. Batch size of 128 was used for all models and weight decay with factor 5e-4 was used to train all networks.

We mainly study three types of neural network architectures:
\begin{itemize}
\item Fully Connected Networks (FCNs): The FCNs consist of 2 fully connected layers. FCN (I) contains 5000 and 1000 hidden units respectively while FCN (II) contains 10000 and 2000 hidden units respectively. Each of these hidden layers is followed by a batch normalization layer and a ReLU activation. The final output layer (that follows the ReLU activation in the second layer) has an output dimension of 10 or 100 (number of classes).
\item VGGs: Architectures by \citet{simonyan2014very} that consists of multiple convolutional layers, followed by multiple fully connected layers and a final classifier layer (with output dimension 10 or 100). We study the VGG with 11 and 16 layers.
\item DenseNets: Architectures by \citet{huang2017densely} that consists of multiple convolutional layers, followed by a final classifier layer (with output dimension 10 or 100). We study the DenseNet with 121 layers.
\item ResNets: Architectures used are ResNets V1 \citep{he2016deep}. All convolutional layers (except downsample convolutional layers) have kernel size $3\times 3$ with stride $1$. Downsample convolutions have stride 2. All the ResNets have five stages (0-4) where each stage has multiple residual/downsample blocks. These stages are followed by a maxpool layer and a final linear layer. Here are further details about the ResNets used in the paper:
\begin{itemize}
\item ResNet18: ResNet18 architechtures studied in the paper have 1 convolutional layer in Stage 0 (64 ouput channels), Stage 1 has 2 residual blocks (64 output channels), Stage 2 has one downsample block and one residual block (128 output channels), Stage 3 has one downsample block and one residual block (256 output channels) and Stage 4 again has one downsample block and a residual block (512 output channels).

 \item ResNet34: ResNet34 architectures in this paper have 5 stages. Stage 0 has 1 convolutional layer with 64 output channels  followed by a ReLU activation. Stage 1 has 3 residual blocks (64 output channels), Stage 2 has 1 downsample block and 3 residual blocks (128 output channels), Stage 3 has 1 downsample block and 5 residual blocks (256 output channels) and, Stage 4 has 1 downsample block and 2 residual blocks (512 output channels).
  \item ResNet50: ResNet50 architectures in this paper again have 5 stages. Stage 0 has 1 convolutional layer with 64 output channels followed by a ReLU activation. Stage 1 has 1 downsample block and 2 residual blocks (256 output channels), Stage 2 has 1 downsample block and 3 residual blocks (512 output channels), Stage 3 has 1 downsample block and 5 residual blocks (1024 output channels) and, Stage 4 has 1 downsample block and 2 residual blocks (2048 output channels).
 \item ResNet101: ResNet101 architectures in this paper again have 5 stages. Stage 0 has 1 convolutional layer with 64 output channels followed by a ReLU activation. Stage 1 has 1 downsample block and 2 residual blocks (256 output channels), Stage 2 has 1 downsample block and 3 residual blocks (512 output channels), Stage 3 has 1 downsample block and 22 residual blocks (1024 output channels) and, Stage 4 has 1 downsample block and 2 residual blocks (2048 output channels).
\end{itemize}
\end{itemize}

The ResNets, DenseNets and VGGs in the paper are trained without batch normalization. 

During training, images are padded with 4 pixels of zeros on all sides, then randomly flipped (horizontally) and cropped. Global mean and standard deviation are computed on all training images and applied to normalize the inputs. While training a ResNet18 on the CIFAR10 dataset with $20\%$ of the labels randomly corrupted, we \emph{do not} augment the training set with images that are randomly flipped and cropped. We also do not use weight decay during training these networks.

\section{Additional experiments} \label{sec:additional_experiments}
In these set of experiments we compare the generalization performance of four ResNet18 architectures where we vary the number of output channels in each stage. In the ResNet18 (1x width) network the number of output channels are 16, 16, 32, 64, 128 in the five stages respectively. The other ResNet18s have their output channels scaled by factors of 2,4 and 8 in each stage. Table~\ref{table:resnet_width} summarizes our results for networks trained on the CIFAR10 dataset and Table~\ref{table:resnet_widthcifar100} summarizes are results for networks trained on the CIFAR100 dataset.  

We find  that separating these networks based on a complexity measure is a much more challenging as these four networks differ by just the channel widths at the different stages. We find that on this task \emph{all complexity measures} that we studied (including ours) does poorly. It is an interesting question for future research to see if our complexity measure can be refined to separate these networks as well.
\begin{table}[h]
\caption{Measuring complexity of different architectures trained on CIFAR10. The ResNet18 architectures have different channel widths. The network ResNet18 (1x width) has 16,16,32,64,128 channels in the five stages.}
\centering
\begin{tabular}{|c|c|c|c|c|c|c|c|c|} %
\hline
ResNet18 (x) & GE & PFN & PSN & DtI & NoP & SoSP & PAC Bayes & Net. Criticality\\
\hline
1x width & 6.27\% & 4e17 & 1e12 & 1409 & 6.9e5         &      6.0e7    & 3.0e5            & 1.0e5\\ 
 2x width & 5.13\% & 8e19 & 4e13 & 2253 & 2.7e6         &     2.9e8     & 4.2e5           & 1.3e5\\
4x width & 4.61\% & 1e22 & 4e14 & 3430 & 1.1e7          &      1.3e9     & 6.9e5             & 2.2e5\\ 
 8x width & 2.88\% & 1e24 & 2e15 & 5365 & 4.4e7       &        5.6e9       & 2.7e6           & 6.7e5\\
\hline
Kendall's $\tau$ & - & -1 & -1 & -1 & -1 & -1 &  -1 &  -1 \\
\hline
\end{tabular}

 \label{table:resnet_width}
\end{table}

\begin{table}[h]
\caption{Measuring complexity of different architectures trained on CIFAR100. The ResNet18 architectures have different channel widths. The network ResNet18 (1x width) has 16,16,32,64,128 channels in the five stages.}
\centering
\begin{tabular}{|c|c|c|c|c|c|c|c|c|} %
\hline
ResNet18 (x) & GE & PFN & PSN & DtI & NoP & SoSP& PAC Bayes & Net. Criticality\\
\hline
1x width & 30.4\% & 6e18 & 3e12 & 2650 & 7.1e5 &7.3e7& 3.2e6 & 1.5e6\\ 
 2x width & 31.8\% & 1e21 & 4e13 & 4248 & 2.8e6 &3.4e8& 2.7e6 & 1.3e6\\
4x width & 30.6\% & 2e22 & 9e14 & 4855 & 1.1e7 &1.4e9& 3.4e6 & 1.6e6\\ 
 8x width & 28.4\% & 3e25 & 1e18 & 9269 & 4.4e7 &8.0e9& 6.9e6 & 2.8e6\\
\hline
Kendall's $\tau$ & - & -0.33 & -0.33 & -0.33  & -0.33 & -0.33 & -0.66 & -0.66 \\
\hline
\end{tabular}

 \label{table:resnet_widthcifar100}
\end{table}
\ificlr
\newpage
\fi
\section{Figures} \label{sec:figures}

\begin{figure}[h]%
    \centering
    \subfloat[Rewind analysis of~\citet{zhang2019all}]{{\includegraphics[scale=0.5]{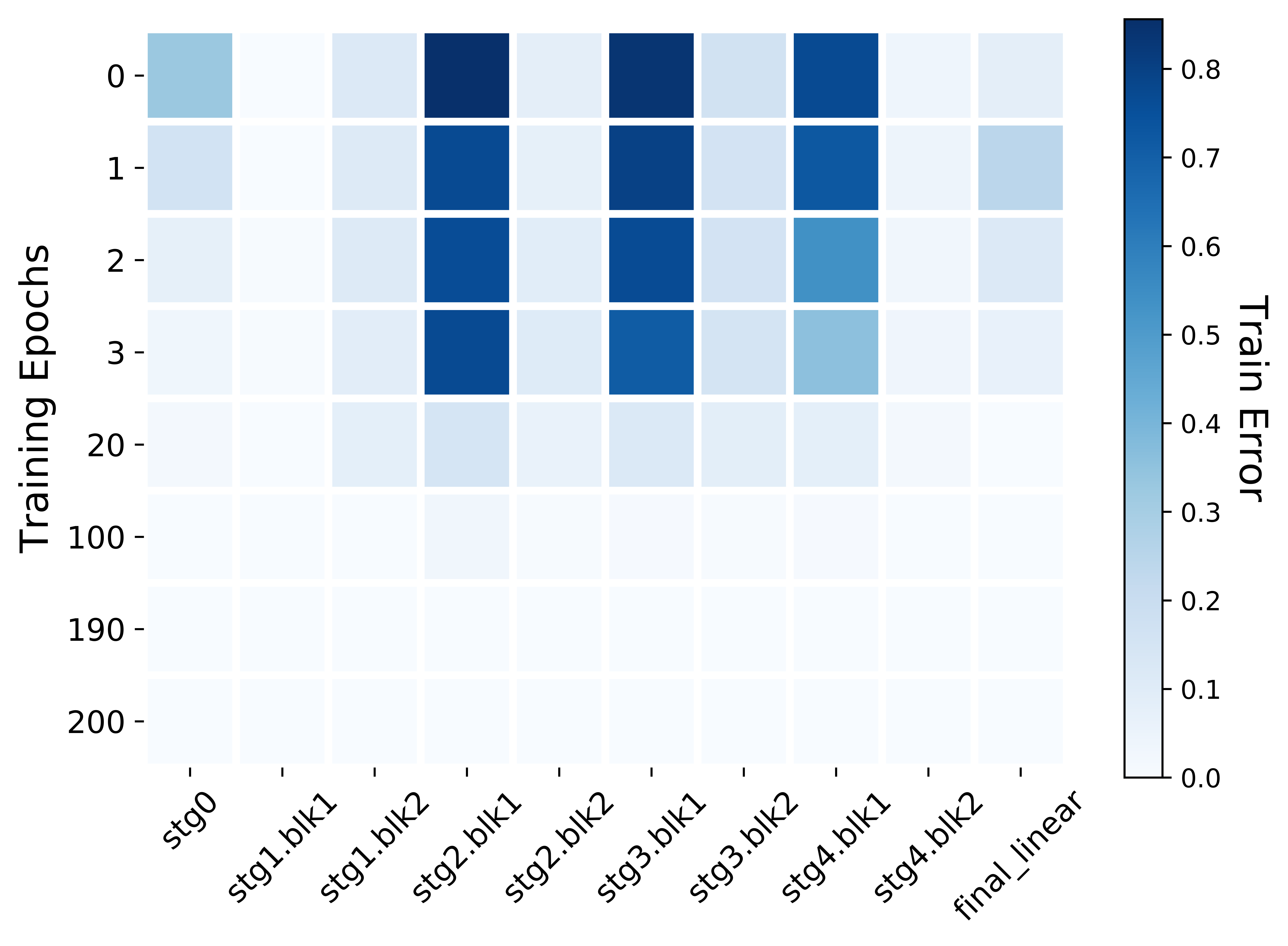}}\label{fig:ablock_reset}}%
    \\
    \subfloat[We rewind each module, whereas,~\citet{zhang2019all} rewind each block]{{\includegraphics[scale=0.5]{images/our_reset.png}}\label{fig:amodule_reset}}%
    \\
    \subfloat[The effect of rewinding on train loss]{{\includegraphics[width=0.5\linewidth]{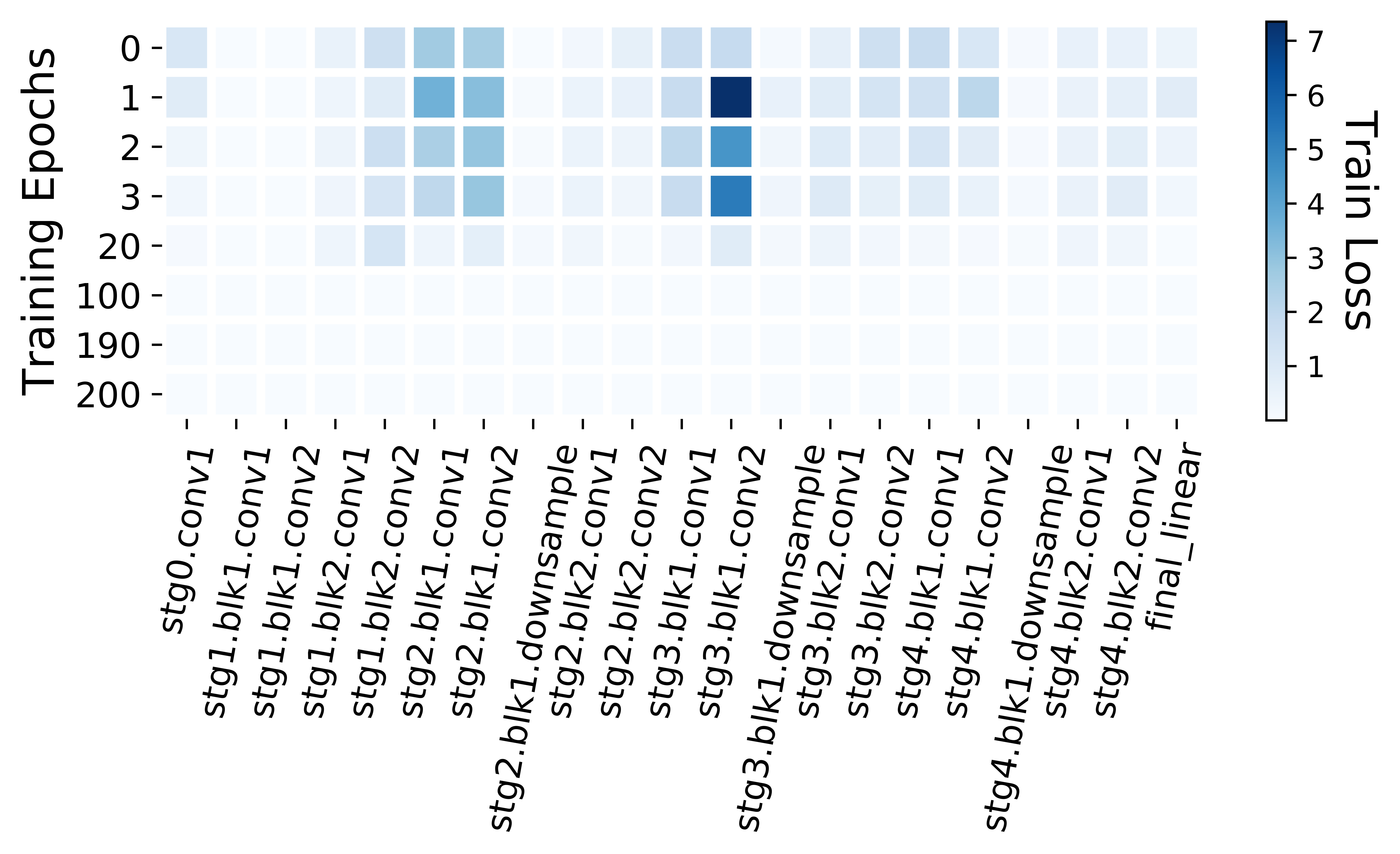}}\label{fig:atrain_reset} }%
    \subfloat[The effect of rewinding on test error]{{\includegraphics[width=0.5\linewidth]{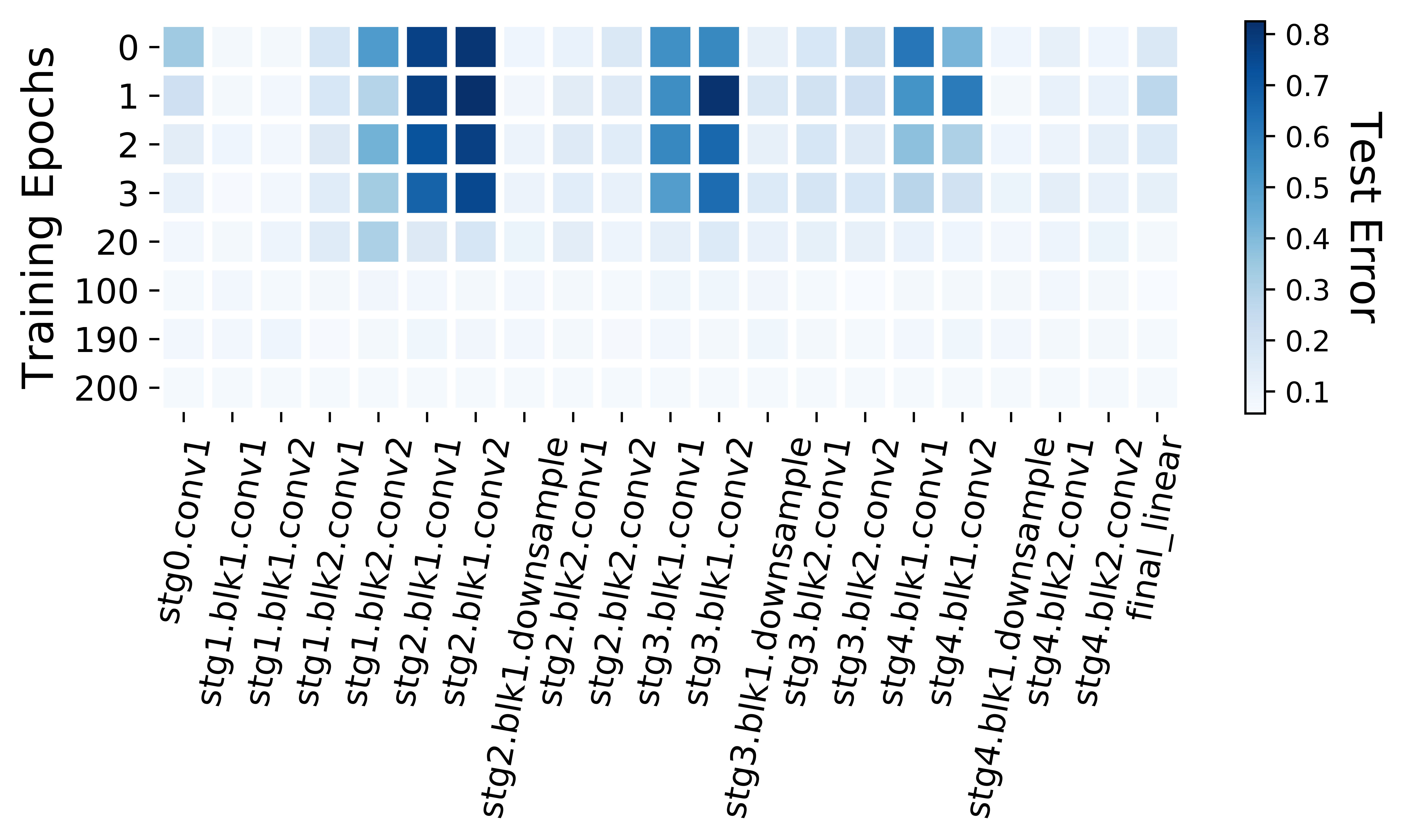}} \label{fig:atest_reset}}%
    \caption{Analysis of rewinding modules to initialization for the ResNet-18 architecture. Each row represents a layer in ResNet18-v1 and each column represents a particular training epoch that the module is rewound to. }
    \label{fig:areset_type}
\end{figure}

\begin{figure}%
    \centering
    \subfloat[Fixup initialization]{{\includegraphics[width=0.7\linewidth]{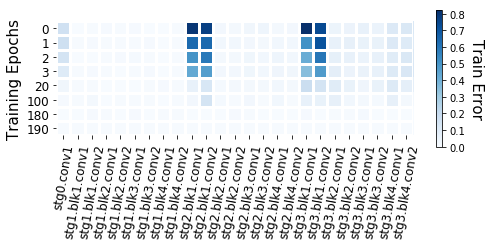} } \label{fig:fixup}}%
\\
        \subfloat[Adam optimizer]{{\includegraphics[width=0.7\linewidth]{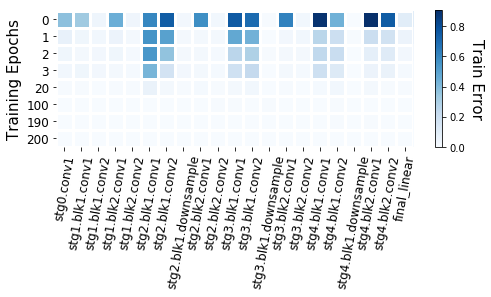}} \label{fig:adam}}%
    \caption{Criticality pattern of Resnet18 when trained with Fixup initialization and with the Adam optimizer}
\end{figure}

\begin{figure}\centering
  \includegraphics[width=0.7\linewidth]{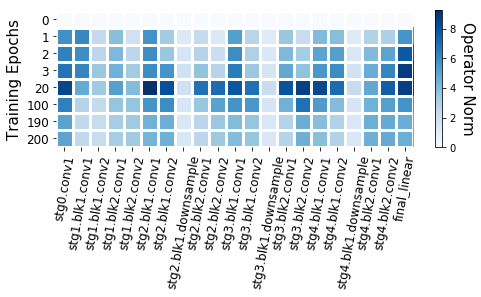}
  \caption{Operator norm of difference from initialization}
  \label{fig:operator_norm}
\end{figure}

\begin{figure}%
    \centering
    \subfloat[Rewind an ambient module]{{\includegraphics[width=0.45\linewidth]{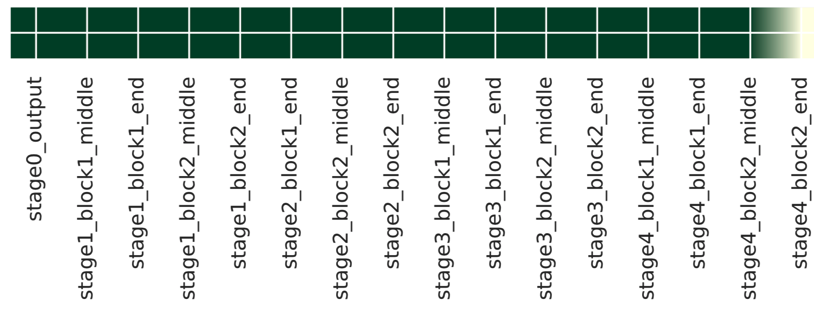}} \label{fig:similarity_1}}%
    \qquad
    \subfloat[Rewind a critical module]{{\includegraphics[width=0.45\linewidth]{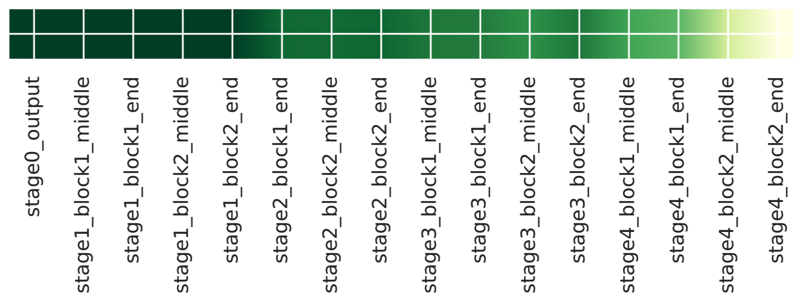}} \label{fig:similarity_2}}%
    \caption{Similarity in activation patterns when an ambient or critical module is rewound. Darker green denotes higher similarity}
    \label{fig:similarity}
\end{figure}



\begin{figure}%
	\centering
	\subfloat{{\includegraphics[width=0.45\linewidth]{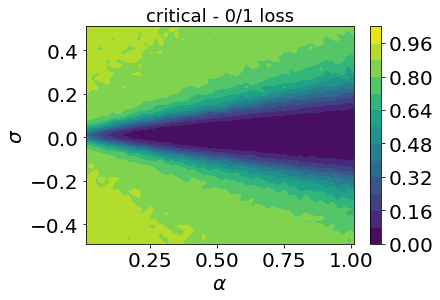}}} \qquad
	\subfloat{{\includegraphics[width=0.45\linewidth]{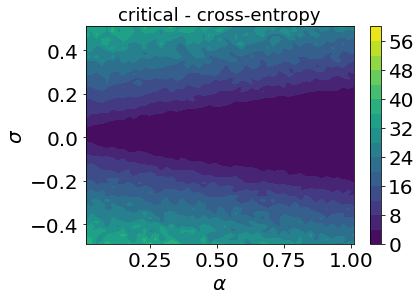}}} \\
	\subfloat{{\includegraphics[width=0.45\linewidth]{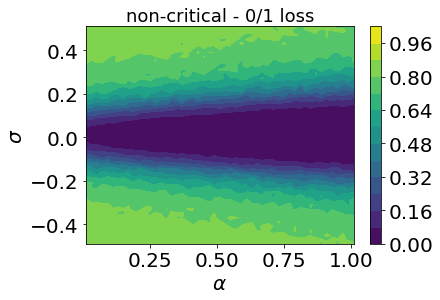}}} \qquad
	\subfloat{{\includegraphics[width=0.45\linewidth]{images/loss-critical.png}}}
	\caption{0/1 loss and cross-entropy loss for critical and non-critical modules, for given different values of $\sigma$ and $\alpha$ in Definition~\ref{def:module_criticality}.}
\end{figure}

\end{document}